\documentclass[10pt,twocolumn,letterpaper]{article}

\usepackage{cvpr}
\usepackage{times}
\usepackage{epsfig}
\usepackage{graphicx}
\usepackage{amsmath}
\usepackage{amssymb}

% Include other packages here, before hyperref.
\usepackage{amsthm,paralist}
\usepackage{graphicx,subcaption}
\usepackage{relsize}
\usepackage{multirow}
\usepackage{subcaption}
\usepackage{algorithm}
\usepackage{algorithmic}
\usepackage{booktabs}
\usepackage{float}
\usepackage{enumitem}

\def\argmax{\mathrm{argmax}}
\def\argmin{\mathrm{argmin}}

\def\argmax{\mathrm{argmax}}
\def\argmin{\mathrm{argmin}}

\newcommand{\N}{\mathbb{N}}
\newcommand{\E}{\mathbb{E}}

\newcommand{\R}{\mathbb{R}}

\newcommand{\relu}{\text{ReLU}}

\newcommand{\herm}{\text{hermite}}
\renewcommand{\slash}{/\penalty\exhyphenpenalty\hspace{0pt}}

\newtheorem{theorem}{Theorem}

\newtheorem{lemma}[theorem]{Lemma}

%\theoremstyle{definition}

% Include other packages here, before hyperref.

% If you comment hyperref and then uncomment it, you should delete
% egpaper.aux before re-running latex.  (Or just hit 'q' on the first latex
% run, let it finish, and you should be clear).
\usepackage[pagebackref=true,breaklinks=true,letterpaper=true,colorlinks,bookmarks=false]{hyperref}

\cvprfinalcopy % *** Uncomment this line for the final submission

 % *** Enter the CVPR Paper ID here

% Pages are numbered in submission mode, and unnumbered in camera-ready
\ifcvprfinal\pagestyle{empty}\fi
\begin{document}

%%%%%%%%% TITLE
%\title{Generating Accurate Pseudo-labels in SSL via Hermite Polynomials with Right Confidence}
\title{Generating Accurate Pseudo-labels in Semi-Supervised Learning and Avoiding Overconfident Predictions via Hermite Polynomial Activations \thanks{Please direct correspondence to Lokhande, Ravi, Singh. \newline
Accepted at 2020 IEEE/CVF Conference on Computer Vision and Pattern Recognition (CVPR). }}

\author{Vishnu Suresh Lokhande\\
	{\tt\small lokhande@cs.wisc.edu}
	% For a paper whose authors are all at the same institution,
	% omit the following lines up until the closing ``}''.
	% Additional authors and addresses can be added with ``\and'',
	% just like the second author.
	% To save space, use either the email address or home page, not both
	\and
	Songwong Tasneeyapant\\
	{\tt\small tasneeyapant@wisc.edu}
	\and
	Abhay Venkatesh\\
	{\tt\small abhay.venkatesh@gmail.com}
	\and
	Sathya N. Ravi\\
	{\tt\small sathya@uic.edu}
	\and
	Vikas Singh\\
	{\tt\small vsingh@biostat.wisc.edu}
}

\maketitle

\begin{abstract}
	    Rectified Linear Units (ReLUs) are
	    among the most widely used activation function
	    in a broad variety of tasks in vision.
	    Recent theoretical results suggest that despite their
	    excellent practical performance,
	    in various cases, a substitution with
	    basis expansions (e.g., polynomials) can yield
	    significant benefits from both the optimization
	    and generalization perspective. Unfortunately,
	    the existing results remain limited
	    to networks with a couple of layers, and the practical
	    viability of these results is not yet known.
	    Motivated by some of these results,
	    we explore the use of Hermite polynomial expansions as a substitute for
	    ReLUs in deep networks. While our experiments with supervised learning
	    do not provide a clear verdict, we find that this strategy
	    offers considerable benefits in semi-supervised learning (SSL) / transductive learning
	    settings. We carefully develop
	    this idea and show how the use of Hermite polynomials based
	    activations can yield improvements in pseudo-label accuracies and
	    sizable financial savings (due to concurrent runtime benefits).
	    Further, we show via theoretical analysis,
	    that the networks (with Hermite activations)
	    offer robustness to noise and other attractive mathematical properties. Code is available on \href{https://github.com/lokhande-vishnu/DeepHermites}{//GitHub}.
\end{abstract}

\begin{figure*}[!t]
	\begin{center}
		\centerline{\includegraphics[width=1.75\columnwidth]{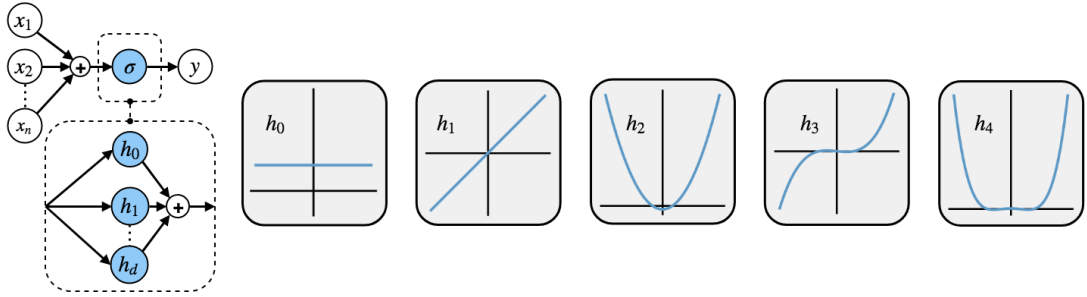}}
		\caption{\label{hermite_relu} \footnotesize \textbf{Hermite Polynomials as Activations} ({\bf leftmost}): Incorporating Hermite Polynomials as an activation function in a single hidden unit one hidden layer network. ({\bf middle}): The functional form of the first 5 hermites are shown in the right.}
	\end{center}
	\vskip -0.4in
\end{figure*}

\section{Introduction}
\label{sec:intro}
Analyzing the optimization or the loss landscape of deep neural networks has emerged 
as a promising means to understand the behavior and properties of various neural network architectures \cite{choromanska2015loss}.
One reason 
is that insights into how the loss function behaves geometrically is closely tied to the types of optimization schemes that may be
needed \cite{zaheer2018adaptive}, 
why specific ideas work whereas others do not \cite{santurkar2018does}, and how or whether the corresponding model may generalize to unseen data \cite{zhang2016understanding}.
%has been a widespread approach to understand the performance of neural networks . 
Notice that we must leverage such ``alternative'' strategies as a window into these models' behavior because in deep learning, 
most models of interest are highly non-linear  and 
non-convex. As a result, one finds that extending mature ideas, which work well for analysis in classical settings (e.g., linear models), is quite a bit harder for most deep architectures
of interest.  

{\bf Why study activation functions?} While there are many ways we may study the optimization landscape of deep models, a productive line of recent results \cite{ge2017learning} proposes 
analyzing the landscape (i.e., the behavior of the loss as a function of the network parameters) via the {\bf activation functions} of the neural network.
%A section of these works study the loss landscape through the eye of  
This makes a lot of sense because the activation function is one critical place where we introduce non-linearity into the network, and their omission significantly 
simplifies any analysis. 
%Indeed, 
%absence of which makes the neural network just a linear transformation. 
Activation functions greatly influence the functional space which a neural network can represent \cite{kileel2019expressive}, often the first 
step in a more formal study of the model's behavior. 
For example, universal finite-sample expressivity of certain architectures has been shown by fixing the activations to 
be ReLU functions \cite{hardt2016identity}. 
%the authors demonstrated that residual
%networks with ReLU activation functions, in particular,
%have . 
In other words, if we use ReLU as activations, such an architecture 
%the model 
can be shown to represent any function if the model has more parameters than the sample size.
The scope of such work is not just theoretical -- for instance, the above results were used 
to derive a much simpler architecture consisting of only residual convolutional layers and ReLU activations. 
Further, the estimation scheme needed was also much simpler and required no batch normalization, dropout, or max pooling. In summary, the choice of activations
enabled understanding the loss landscape and enabled simplifications. 

{\bf More on activations functions and loss landscape.}
The authors in \cite{venturi2018spurious} showed that conditions that prevent presence of spurious valleys on the loss landscape
can be identified, via the use of smooth activations. Independently, 
\cite{soltanolkotabi2018theoretical} demonstrated that the use of quadratic activation functions
enables efficient localization of global minima in certain classes of deep models.
%Another work  the accessibility to benign properties of the loss landscape enabling efficient location of global minima when using .  identify the conditions to prevent the  using smooth activations.
Closely related to smooth and quadratic activations, the use of polynomial non-linearity as an activation
has also been studied in the last few years. For example, 
\cite{poggio2017theory} studied deep ReLU networks by using a polynomial non-linearity as an approximation for ReLU: this enabled a much
cleaner analysis of the empirical risk landscape.
More recently, \cite{kileel2019expressive} analyzed the functional space of the networks with the help of polynomial activation
based on techniques from algebraic geometry.
Earlier, for a one hidden layer network, 
\cite{ge2017learning} investigated optimizing the population risk of the loss using
stochastic gradient descent and showed that one could
avoid spurious local minima by utilizing an orthogonal basis expansion for ReLUs.
A key takeaway from this work is that the optimization would behave better if the
landscape was nicely behaved --- this is enabled via the basis expansion. 
The foregoing results and analyses, especially the use of {\bf basis expansion}, is interesting and 
a starting point for the development described here.
A common feature of the body of work summarized above is that they rely on networks with polynomial activations (polynomial networks) to analyze the loss landscape.
This choice helps make the mathematical exploration easier.

{\bf Where is the gap in the literature?} Despite the interesting theoretical results summarized above, relatively less is known whether such
a strategy or its variants are a good idea
for the architectures in computer vision and broadly, in AI, today.
We can, in fact,
ask a more practically focused question: are there specific tasks in computer vision where such a strategy
offers strong empirical advantages? This is precisely the gap this paper
is designed to address. 

The {\bf main contributions} include:
\begin{inparaenum}[\bfseries (a)]
	\item We describe mechanisms via which activation functions based on Hermite polynomials can be utilized within
	deep networks instead of ReLUs, with only minor changes to the architecture.
	\item We present evidence showing that while these adjustments are not 
	significantly advantageous in supervised learning, our scheme {\em does} yield sizable advantages in semi-supervised
	learning speeding up convergence. Therefore, it offers clear benefits in compute time (and cost) needed to attain
	a certain pseudo-label accuracy, which has direct cost implications.
	\item We give technical results analyzing the mathematical behavior of such activations, specifically, robustness results showing how the activation mitigates overconfident predictions for (out of distribution) samples. 
\end{inparaenum}

%The {\bf main contributions} of this paper include:
%\begin{enumerate}[label=(\alph*), topsep=0pt,itemsep=-2ex] %[\bfseries (a)]
%\item We describe mechanisms via which activation functions based on Hermite polynomials can be utilized within
%  deep networks instead of ReLUs, with only minor changes to the architecture.
%\item We present evidence showing that while these adjustments are not 
%  significantly advantageous in supervised learning, our scheme {\em does} yield sizable advantages in semi-supervised / transductive
%  learning speeding up convergence. Therefore, it offers clear benefits in compute time (and cost) needed to attain
%  a certain pseudo-label accuracy, which has direct cost implications.
%\item We give technical results analyzing the mathematical behavior of such activations. 
%\end{enumerate}
%%}

\section{Brief Review of Hermite polynomials}
\label{primer}
We will use an expansion based on Hermite polynomials as a substitute for ReLU activations.
To describe the construction clearly, we briefly review the basic properties. 

{\bf Hermite polynomials} are a class of orthogonal polynomials which are appealing both theoretically
as well as in practice, e.g., radio communications \cite{boyd1984asymptotic}.
Here, we will use Hermite polynomials \cite{rasiah1997modelling} defined as, 
\begin{align}
	\label{eq:hermite}
	H_i(x) = (-1)^i e^{x^2} \frac{d^i}{d x^i}e^{-x^2} \text{, $i > 0$;  }  	H_0(x) = 1
\end{align}
In particular, we use normalized Hermite polynomials which are  given as $h_i = \frac{H_i}{\sqrt{i!}}$. Hermite polynomials are often used in the analysis of algorithms for nonconvex optimization problems \cite{mossel2005noise}. While there are various mathematical properties associated with Hermites, we will now discuss the most important property for our purposes.

{\bf Hermite polynomials as bases.} Classical results in functional analysis
show that the (countably infinite) set $\{H_i\}_{i=0}^{d=\infty}$ defined in \eqref{eq:hermite}
can be used as bases to represent smooth functions \cite{rudin2006real}.  Formally, let $L^2(\R, e^{-x^2/2})$ denote the set of integrable functions w.r.t. the Gaussian measure, 
$$L^2(\R, e^{-x^2/2}) = \{ f : \int_{-\infty}^{+\infty} f(x)^2 e^{-x^2/2} dx < \infty \},$$ It turns out that $L^2(\R, e^{-x^2/2})$ is a
Hilbert space with the inner product defined as follows (see \cite{ge2017learning} for more details)
\begin{align}
\langle f, g \rangle &= \E_{x \sim \N(0, 1)} [f(x) g(x)] 
\end{align}
The normalized Hermite polynomials form an {\em orthonormal basis} in $L^2(\R, e^{-x^2/2})$ in the sense that $\langle h_i, h_j \rangle = \delta_{ij}$. Here $\delta_{ij}$ is the Kronecker delta function and
$h_i, h_j$ are any two normalized Hermite polynomials.

Recently, the
authors in \cite{ge2017learning} showed that the lower order terms in the
Hermite polynomial series expansion of ReLU have a different rate of convergence
on the optimization landscape than the higher order terms for one hidden layer networks. 
%Intuitively, their results suggest that this may help
%accelerate the performance of gradient based methods for one hidden layer network, which
This leads to a natural {\bf question}: are these properties useful to accelerate the performance
of gradient-based methods for deep networks as well?
%However, their results are valid only for one-hidden layer networks.
%In our paper, we explore the benefits provided by Hermites in the context of training deep neural networks.

{\bf Hermite Polynomials as Activations.} 
%We will first setup notations that will be useful throught this paper.
Let $x = (x_1, \ldots, x_n)$ be an input to a neuron in
a neural network and $y$ be the output. Let $w = (w_1, w_2, \ldots, w_n)$ be the weights associated with the neuron. Let $\sigma$ be the non-linear activation applied to $w^Tx$.
Often we set $\sigma=\relu$. Here, we investigate the
scenario where
$\sigma(x) = \sum_{i=0}^d c_i h_i (x)$, denoted as $\sigma_{\herm}$, were $h_i$'s are as defined previously
and $c_i$'s are {\bf trainable parameters}.
%{\color{blue}For simplicity}, the 
%parameters $c_i's$ may be initialized as $c_i = \hat\sigma_i$, where $\hat\sigma_i = \langle ReLU, h_i \rangle$.
As \cite{ge2017learning} suggests, we initialized the parameters $c_i's$ associated with hermites to be $c_i = \hat\sigma_i$, where $\hat\sigma_i = \langle \relu, h_i \rangle$.
Hermite polynomials as activations on a single layer neural network with a single hidden unit can be seen in Figure~\ref{hermite_relu}.

\section{Sanity checks: What do we gain or lose?}
\label{resnets}

%% Before moving to the more interesting semi-supervised setting, we first describe some simple
%% experiments in the supervised setting, with the activation function substitution described above to help
%% appreciate the pros/cons of the Hermite activations.
Replacing ReLUs with other activation functions reviewed in Section~\ref{sec:intro} has been variously
attempted in the literature already, for supervised learning. While improvements have been reported in specific
settings, ReLUs continue to be relatively competitive. Therefore, it seems that we should
not  expect improvements in what are toy supervised learning experiments. However, as we describe below,
the experiments yield useful insights regarding settings where Hermites will be particularly useful. 

\begin{figure}[!b]
	%\vskip -0.2in
	\begin{center}
		\centerline{
			\includegraphics[width=0.5\columnwidth]{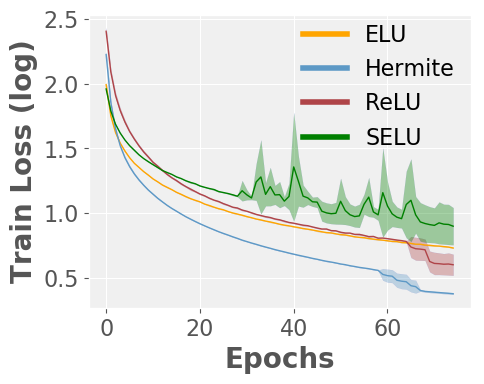}
			\includegraphics[width=0.5\columnwidth]{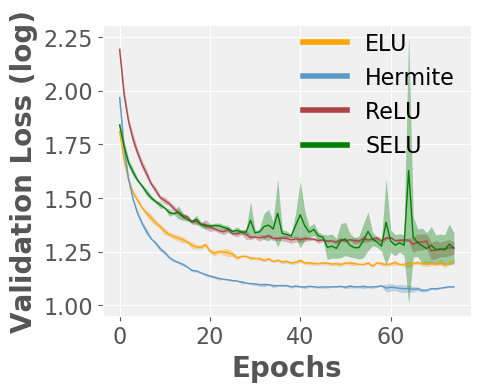}}
		\caption{\label{supp_ae_activ} \footnotesize \textbf{Other Activation Functions.} We observe an increase in the rate of convergence of the validation loss for Hermites  when compared to other activations such as ELU, ReLU and SeLU. A standard auto-encoder model as outlined in \cite{zaheer2018adaptive} is used to test on MNIST dataset.}
	\end{center}
	\vskip -0.2in
\end{figure} 

%% The setup for the experiments have been discussed more elaborately in the appendix. We start with two-layer networks and use the CIFAR10 dataset which is a standard vision dataset.
{\bf A) Two-layer architectures.} We start with the CIFAR10 dataset and a simple two-layer network (more details in the appendix $\S$~\ref{sec_app_shallow}). Basically, 
we ask 
%pertinent question is whether
if 
this modification will lead to better or worse performance in accuracy and runtime (over ReLU activation)
with all other parameters (e.g., learning rates) fixed. 
%By just replacing ReLU activations with Hermite Activations and maintaining the learning rates as same,
%We summarize our main findings below.
{\bf Positives:} 
Here, we observed {\em faster} training loss convergence (over the initial epochs) compared to ReLU in general.
{\bf Negatives:} 
When we assessed the performance as a function of the number of Hermite polynomials $d$,
we see a tradeoff where the speeds first improve with increasing $d$ and then worsen when $d$ is as high as $8$,
indicating that too many bases, especially for relatively shallow architectures are not ideal. 
%% {\color{red}The effect can be attributed to the increased level of smoothness of the network upon having smooth
%% activations such as hermite polynomials over ReLUs.}
%Note that both ReLUs and Hermites perform unsatisfactorily on CIFAR10 in terms of accuracy. 

{\bf B) Deep Autoencoders.}
Encouraged by the shallow network experiments, we moved to a standard benchmark for neural network optimization runtimes:
the deep autoencoders setup from \cite{zaheer2018adaptive}.
%The symmetric MNIST autoencoder is set-up with an architecture $(28 \times 28)-1000-500-250-30$.
We replicate the experiments reported in \cite{zaheer2018adaptive} for the MNIST dataset: we use $4$ Hermite polynomials
as an activation instead of the original sigmoid.
{\bf Positives:} We see from Table~\ref{tab:AE} that Hermite activations not only 
converge faster but also achieve lower test loss.
We also evaluated how activation functions such as ReLU, SeLU or ELU perform in this setting. As shown in Figure~\ref{supp_ae_activ}, we find that Hermites still offer runtime improvements here with a  comparable validation loss. We find that SeLU does not seem to be as stable as the other activation functions. Therefore, in the remainder of this paper, we study Hermite activations compared to ReLUs as a baseline. Of course, 
it is quite possible that some other activations may provide slightly better 
performance compared to ReLUs in specific settings. But the choice above simplifies the presentation of our results and is consistent 
with the general message of our work: while research on polynomial activations has so far remained largely theoretical, we intend to show 
that they are easily deployable and offer various practical benefits. 
%in general are practical 
%Note that 
%another reason for this choice is that 
%The reader will appreciate that a secondary goal of this work 
%is to encourage exploration of polynomial activations in general, %where research has X
%Our work doesn't aim to \textit{sell} a standard better activation, %but to demonstrate to the research community the practical benefits of %using polynomial activation based networks, where research has long %remained largely theoretical (precisely filling the gap in the %literature as outlined in Section~\ref{sec:intro}).
%}\\

\begin{table}[!t]
	\centering
	\resizebox{\columnwidth}{!}{%
		\begin{tabular}{ccccc}
			Method & LR & $\epsilon$  & Train Loss & Test Loss \\ \hline\hline
			Sigmoid-Adam \cite{zaheer2018adaptive} &    $10^{-3}$  &  $10^{-8}$  & $2.97 \pm 0.06$ & $7.91 \pm 0.14$  \\ 
			Sigmoid-Adam \cite{zaheer2018adaptive} &    $10^{-3}$ &   $10^{-3}$  &  $1.90 \pm 0.08$ & $4.42 \pm 0.29$  \\
			Hermite-Adam (Ours)&    $10^{-3}$ &   $10^{-8}$  &  \textbf{1.89} $\pm$ \textbf{0.01}  &  \textbf{3.16} $\pm$ \textbf{0.02}  \\
			\hline\hline
		\end{tabular}%
	}
	\caption{\label{tab:AE} \footnotesize {\bf Deep autoencoders with Hermite Activations give lower test loss} Results from our implementation following directions from  \cite{zaheer2018adaptive}.}
	\vskip -0.2in
\end{table}

\begin{figure}[!b]
	\begin{center}
		\centerline{\includegraphics[width=0.8\columnwidth]{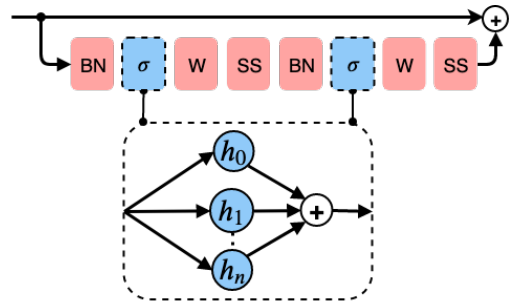}}
		\caption{\label{resnets_preact} \footnotesize \textbf{Hermite Polynomials as Activations in ResNets}. We introduce softsign function to
			handle the numerical issues from the unbounded nature of Hermite polynomials.  $W$ denotes the weight, $BN$ denotes batch normalization, $\sigma$ is the
			Hermite activation and $SS$ is the softsign function.}
	\end{center}
	\vskip -0.2in
\end{figure}

\begin{figure*}[!t]
	\begin{subfigure}[b]{0.75\linewidth}
		\includegraphics[width=\linewidth]{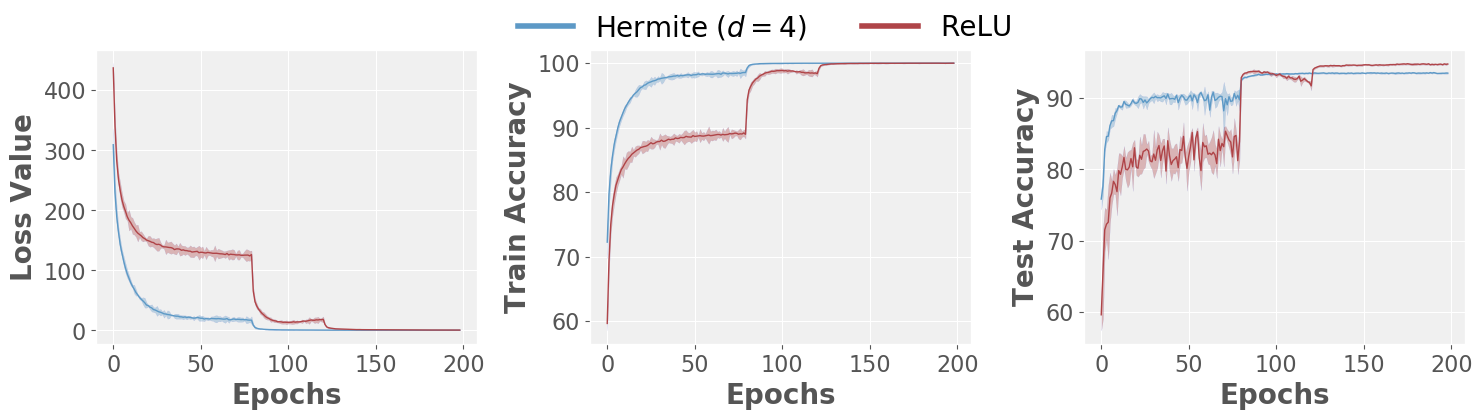}
		\caption{Loss and Accuracy curves}
	\end{subfigure}%	
	\begin{subfigure}[b]{0.25\linewidth}
		\includegraphics[width=\linewidth]{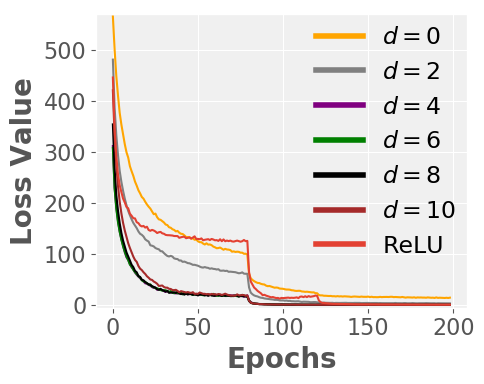}
		\caption{Effects of changing $d$}
		\label{resnets_change_d}
	\end{subfigure}%	
	\caption{\label{resnets_loss_train} \footnotesize \textbf{Hermites vs. ReLUs on ResNet18.} (a) Hermites provide faster convergence of train loss and train accuracies than ReLUs. Hermites have faster convergence in test accuracies over the initial epochs but ReLU has the higher test accuracy at the end of training. (b) As we increase the number of hermite polynomials, the speed of loss convergence increases until $d=6$ and then it starts to reduce. $d\ge1$ performs better than $d=0$ where only softsign is used as an activation. The close overlap between $d=4$ to $d=10$ suggests that tuning for $d$ is not very expensive. }
		\vskip -0.2in
\end{figure*}

{\bf Adjustments needed for deeper architectures?}
When implemented naively, Hermite activations do {\bf not} work well for deeper architectures directly, which
may be a reason that they have not been carefully explored so far. 
Basically, with no adjustments, we encounter a number of numerical issues that
are not easy to fix. In fact,
\cite{glorot2011deep} explicitly notes that higher-order polynomials tend to
make the activations unbounded making the training unstable. 
Fortunately, a trick mentioned in \cite{bergstra2009quadratic} in the
context of quadratic functions, addresses the problem. 
The solution is to add a softsign function in  \eqref{eq:softsign}, 
%The softsign function,,
which has a form similar to tanh however, it approaches its maximal (and minimal) value slower than tanh.
\begin{align}
  \label{eq:softsign}
  \text{Softsign}(x) = \frac{x}{1+|x|}
\end{align}

{\bf C) ResNet18.}
With the above modification in hand, we can use our activation function 
%We tend to see a similar behaviour as above when for experiments with
within Resnet18 using Preactivation Blocks \cite{he2016deep,he2016identity}.
%We resolve this issue by
In the preactivation block, we found that having the second softsign function {\em after}
the weight layer is useful. 
%has slight benefits in terms of test accuracy.
The slightly modified preactivation block of ResNets is shown in Figure~\ref{resnets_preact}. 
We train ResNets with Hermite activations on
CIFAR10 to assess the general behavior of our substitution.
We use SGD as an optimizer and follow the data augmentation techniques and
the learning rate schedules as in \cite{he2016deep,chollet2015keras}.
We perform cross-validation for the hyper-parameters.
After training, we obtain the loss curve and the  training set accuracies for
Hermite activations and ReLUs (see Fig. \ref{resnets_loss_train}).
{\bf Positives:}
Figure~\ref{resnets_loss_train} shows that the loss values and trainset accuracies converge at a much faster rate for Hermites than ReLUs.
While the test set accuracy at the {\em final} epoch is higher for ReLU than Hermites,
the test set accuracies for networks using Hermite activations
{\em make much quicker progress in the initial epochs}.
These results hold even when varying the learning rates of ReLU networks (see appendix $\S$~\ref{sec_app_resnet}).
%
%
%Additional experiments
%and
%%%%%additional experiments are available in the appendix.\\
{\bf Negatives:} We also tune the choice of the number of Hermite polynomials $d$ for $d \in \{0, 2, 4, 6, 8, 10\}$. The setting  $d = 0$ is the case where
we only use a softsign as an activation without any Hermite activations.
Figure~\ref{resnets_change_d} shows the results of this experiment.
From the plot, we observe a trend similar to the two-layer network above,
where the convergence speeds first improves and then reduces as we increase the
number of Hermite polynomials. The close overlap of the trend lines between $d=4$ to $d=10$ suggest that tuning for $d$ is not very expensive. Hence, in all our experiments we mostly set $d=4$. The setting $d=0$ performs worse than when $d$ is at least one, suggesting that the performance benefits are due to Hermite activations with softsign (and not
the softsign function on its own).

%{\bf Observation for ResNet18.}
{\bf Interesting take away from experiments?}
Let us consider the negative results first where we find that a large number of bases is not useful. This makes sense where
some flexibility in terms of trainable parameters is useful, but too much flexibility is harmful. Therefore,
we simply need to set $d$ to a small (but not too small) constant. On the other hand, the positive
results from our simple experiments above suggest that networks with Hermite activations make rapid
progress in the early to mid epoch stages -- an {\bf early riser} property -- and the performance gap
becomes small later on. This provides two potential strategies. If desired, we could design
a hybrid optimization scheme that exploits this behavior. One difficulty is
that initializing a ReLU based network with weights learned for a network with
Hermite activations (and retraining) may partly offset the benefits from
the quicker progress made in the early epochs. What will
be more compelling is to utilize the Hermite activations end to end, but
identify scenarios where this ``early riser'' property is critical and
directly influences the final goals or outcomes of the task.
It turns out that recent developments in semi-supervised learning satisfy
exactly these conditions where leveraging the early riser property is
immensely beneficial.

\section{Semi-Supervised Learning (SSL)}
\label{application}

{\bf SSL, Transductive learning and pseudo-labeling.}
We consider the class of algorithms that solve SSL by generating pseudo-labels
for the unlabelled data. These \textit{Pseudo-label (PL) based SSL} methods
serve as a way to execute transductive learning \cite{iscen2019label,shi2018transductive}, where
label inference on the {\bf given test dataset} is
more important for the user as compared to using the trained model later,
on other unseen samples from
other data sources.
%We note that one may, if desired, use s
Stand-alone PL based methods can be used 
for generic SSL problems also -- where the generation of pseudo-labels are merely a means
to an end. The literature suggests that
in general they perform competitively, but some 
specialized SSL methods may provide some marginal benefits.
%of more significance than generalization to other datasets.
%More generally, by PL we refer to the dependent variable $y$ that
%our model (parameterized by $W$) is trained to predict using the independent variable $x$.
In the other direction, general (non PL based)
SSL methods can also be used for obtaining pseudo-labels.
But PL based SSL generates
pseudo-labels concurrently with (in fact, to facilitate) estimating the parameters of the model.
So, if the eventual goal is to obtain labels for unlabeled data at hand,
%PL based SSL approaches
these methods
are a good fit -- they explicitly assign (potentially) high accurate labels for the unlabeled
data at the end of training.

\begin{algorithm}[!t]
	\caption{}
	\label{alg:saas}
	\begin{algorithmic}
		\STATE {\em Input:} Labeled data $(x_i,y_i)$, unlabeled data $(z_i)$, number of classes $k$, \#-outer (inner) epochs $M_O (M_I)$, loss function $L = L_{CE}(x_i,y_i) + L_{CE}(z_i,y_i) + Reg_E(z_i,y_i)$, learning rates $\eta_w,\eta_P^p,\eta_P^d$. Initial Pseudo-labels for unlabeled data chosen as: $y_i=e_i$ with probability $1/k$ where $e_i$ is the one-hot vector at $i-$th coordinate.
		\FOR{$O = 0, 1, 2, ...,M_O$}
		\STATE Reinitialize the network parameters $w^0 $ 
		\STATE  $\mathit{\Delta} P_u = 0$
		\FOR{$I = 0, 1, 2, ...,M_I$}
		\STATE (Primal) SGD Step on $w$: $w^{t+1} \gets w^{t} -\eta_w \nabla L$
		\STATE (Primal) SGD Step on $\Delta P_u$: $\mathit{\Delta} P_u \gets \mathit{\Delta} P_u - \eta_P^p \nabla L$
		\ENDFOR
		\STATE (Dual) SGD Step on $P_u$: $P_u \gets P_u - \eta_P^d\mathit{\Delta} P_u$
		\ENDFOR
		\STATE {\em Output:} Classification model $w$.
	\end{algorithmic}
\end{algorithm}

\begin{figure}[!t]
	\begin{center}
		\centerline{\includegraphics[width=0.75\columnwidth]{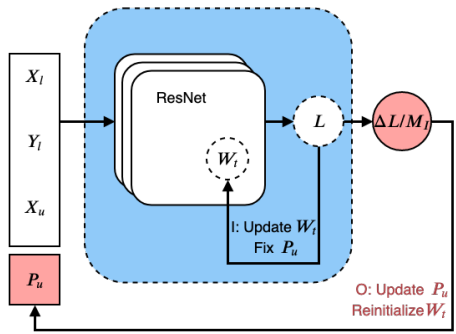}}
		\caption{\label{saas_flowchart} \footnotesize \textbf{SaaS framework illustration. }SaaS runs over two loops:
			inner loop denoted by $I$ and outer loop denoted by $O$. }
	\end{center}
	\vskip -0.4in
\end{figure}

{\bf Improved deep classifiers by assigning PL.} 
It is well known that poor (random) labels are harder to fit (train) \cite{zhang2016understanding}, or
equivalently, high quality (pseudo) labels accelerates the training procedure. In other words,
highly accurate labels $\to$ fast training. 
Interestingly, \cite{Cicek_2018_ECCV} showed that the converse statement, i.e., fast training $\to$ accuracy of labels, 
is {\em also} true using empirical evidence, during training.
SaaS ({\bf Speed as a Supervisor}) is used
to {\em estimate} the pseudo-labels using ``speed'' as a surrogate.
That is, for a classifier such as ResNet,
SaaS takes advantage of the fact that the {\em loss decreases at a faster rate for correct
labels} (compared to random labels) during training. Furthermore,
the {\em rate of loss reduction decreases as the percentage of incorrect labels in the dataset increases}. 
{\bf The SaaS framework.} There are two phases in SaaS. In the primal phase (inner loop), SaaS seeks
to find a set of pseudo-labels that decreases the loss function over a {\em small number of epochs}, the most.
In the dual phase (outer loop), the ``pseudo-label'' optimization is carried out
by computing a posterior distribution over the unlabeled data.
Specifically, the algorithm consists of two loops: (i) in the outer loop,
we optimize over the posterior distribution of the pseudo-labels, and (ii) in the inner loop,
we retrain the network with fixed pseudo-labels.
After every inner loop, we reinitialize the network, and compute the rate of change of the loss value
with which the posterior can be computed. A flowchart is shown in Figure \ref{saas_flowchart}.
We can easily
use a ResNet/DenseNet model in the primal phase.
There are two different loss functions that are optimized during training: the cross-entropy loss ($L_{CE}$) over the labeled data,
the unlabeled data and an entropy regularizer ($Reg_E$) on the pseudo-labels.
A pseudocode is provided in Algorithm~\ref{alg:saas}.

{\bf Pseudo-labels with Hermites.} Recall that
networks with Hermite activations manifest the ``early riser'' property. This turns out to be
ideal in the setting described above and next, we show how this property can be exploited for
transductive/semi-supervised learning. 
%Now we show how to exploit such an early riser property for semi-supervised learning.
Intuitively, the early riser property implies that the
training loss decreases at a {\bf much faster rate} in the initial epochs.
This is expected, since the optimization
landscape, when we use Hermites are, by definition, {\em smoother} than ReLUs,
since {\em all} the neurons are {\em always} active during training with probability $1$. 

\begin{table}[!t]
	\centering
	\resizebox{\columnwidth}{!}{%
		\begin{tabular}{c c c c c}
			\multirow{1}{*}{Dataset}   & \# Labeled.             & \# Unlabeled.            &  Augmnt.                      & $M_I$/ $M_O$  \\ \hline\hline
			\multirow{1}{*}{SVHN}      & \multirow{1}{*}{1K} & \multirow{1}{*}{72,257} & A $+$ N              & \multirow{1}{*}{5 $ \slash $ 75}\\\hline  
			\multirow{1}{*}{CIFAR-10}  & \multirow{1}{*}{4K} & \multirow{1}{*}{46,000} & A $+$ N         & \multirow{1}{*}{10 $ \slash $ 135} \\\hline
			\multirow{1}{*}{SmallNORB} & \multirow{1}{*}{1K} & \multirow{1}{*}{23,300} & \multirow{1}{*}{None}      & \multirow{1}{*}{10 $ \slash $ 135} \\\hline
			\multirow{1}{*}{MNIST}     & \multirow{1}{*}{1K} & \multirow{1}{*}{59,000} & \multirow{1}{*}{N} & \multirow{1}{*}{1 $ \slash $ 75}   \\
			\hline\hline
		\end{tabular}%
	}
	\caption{\label{saas_hp_table} \footnotesize \textbf{SSL experimental details.} $A$ and $N$ denote the data augmentation techniques, affine transformation and normalization respectively. $M_I$ and $M_O$ denote the inner and outer epochs respectively.}
	\vskip -0.2in
\end{table}

{\bf Setting up.} For our experiments,
we used a ResNet-18 architecture (with a preactivation block) \cite{he2016identity} architecture to
run SaaS \cite{Cicek_2018_ECCV} with {\bf one crucial difference}: ReLU activations were replaced with Hermite activations.
All other hyperparameters needed are provided in Table \ref{saas_hp_table}.
We will call this version, {\bf Hermite-SaaS}.
To make sure that our findings are broadly applicable, we conducted experiments with four datasets commonly used in the semi-supervised learning literature: SVHN, CIFAR10, SmallNORB, and MNIST.

\begin{figure*}[!t]
	\begin{center}
		\centerline{\includegraphics[width=\linewidth]{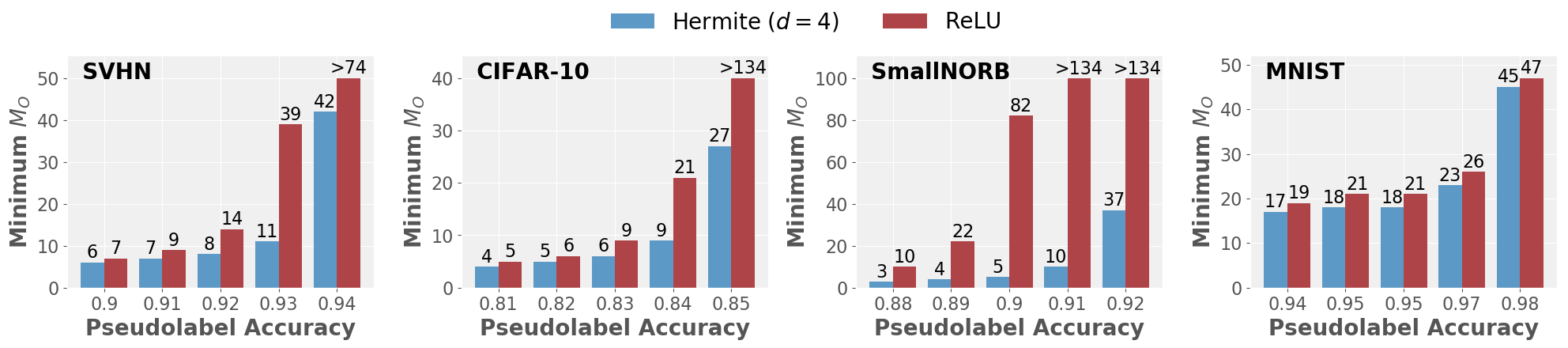}}
		\caption{\label{phase1_epochs_accuracy} \footnotesize \textbf{Hermite-SaaS trains faster.} We plot the number of outer epochs $M_O$ vs. the pseudo-label accuracy across $4$ datasets. We consistently observe that the minimum number of outer epochs $M_O$ to reach a given value of pseudo-label accuracy is always lower for Hermite-SaaS than ReLU-SaaS.}
	\end{center}
	\vskip -0.4in
\end{figure*}

\begin{figure}[!b]
	%\vskip 0.2in
	\begin{center}
		\centerline{\includegraphics[width=\columnwidth]{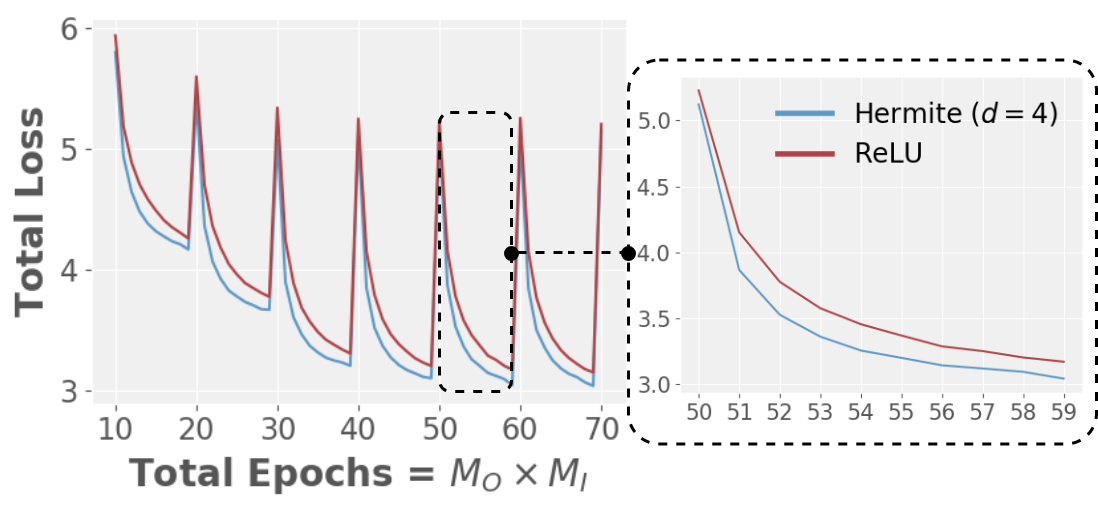}}
		\caption{\label{phase1_cifar_verif} \footnotesize\textbf{Convergence of loss functions in SaaS. }Larger spikes correspond to the end of inner loop and are due to weight reinitialization. Right plot shows that Hermites accelerate the training, ensuring high-quality pseudo-labels.}
	\end{center}
	\vskip -0.2in
\end{figure}

\subsection{Faster Convergence.}
\label{sec:saas_results:fast_convergence}
SaaS tries to identify labels on which the training loss decreases at a faster rate. Our experiments show that
the use of Hermites provides models on which the loss function can be optimized easily,
thus stabilizing the two phase procedure.
Figure \ref{phase1_cifar_verif} shows the result of our experiment on CIFAR10 dataset.
Notice that the periodic jumps in the loss value is expected.
This is because the
big jumps correspond to the termination of an inner epoch, where
the pseudo-labels are updated and weights are reinitialized.
From Figure \ref{phase1_cifar_verif}, we observe that Hermite-SaaS provides
a smoother landscape during training, accelerating the training process overall. 

%\begin{table}[!b]
%	\centering
%	\resizebox{0.75\columnwidth}{!}{%
%		\begin{tabular}{c c c c} 
%			& & Pseudolabel & Max \\  
%			& & Accuracies & $\Delta$ \\ [0.5ex]
%			\hline\hline
%			\multirow{2}{*}{SVHN-1k} & Hermite & 94.2\% & \multirow{2}{*}{6.1\%} \\
%			& ReLU & 93.3\% &  \\ \hline
%			\multirow{2}{*}{CIFAR-10-4k} & Hermite & 85.5\% & \multirow{2}{*}{3.4\%} \\
%			& ReLU & 84.4\% &  \\ \hline
%			\multirow{2}{*}{SmallNORB-1k} & Hermite & 92.6\% & \multirow{2}{*}{5.2\%} \\
%			& ReLU & 90.4\% &  \\ \hline
%			\multirow{2}{*}{MNIST-1k} & Hermite & 98.2\% & \multirow{2}{*}{4.5\%} \\
%			& ReLU & 98.2\% &  \\ 
%			\hline\hline
%		\end{tabular}
%	}
%	\caption{Pseudolabel performance metrics. We also report the maximum difference $\Delta$ in Hermite and ReLU accuracies attained during training.}
%	\label{phase1_all_tab}
%\end{table}

\subsection{Computational Benefits.}
\label{sec_phase1_aws} 
When the accuracy of pseudo-labels is assessed against the actual labels for SVHN dataset, we obtain
an error rate of \textbf{5.79} over the pseudo-label error rate of 6.22 in \cite{Cicek_2018_ECCV}. This indicates that the quality of pseudo-labels can be significantly
improved using Hermite-SaaS. Moreover, we also find that the number of epochs needed to reach a specific pseudo-label accuracy is also significantly lower. Table~\ref{phase1_aws} shows two common metrics used in classification: (i) ``max gap in accuracy'' (Max $\Delta$) measures the maximum difference in
pseudo-label accuracy over epochs during training; and (ii) ``pseudo-label accuracy'' (Max PL ACC) measures the maximum pseudo-label accuracy attained during the course of training. Both these measures form a proxy to assess the the quality of pseudo-labels. In addition, Figure \ref{phase1_epochs_accuracy} shows that the number of epochs needed to reach a
specific pseudo-label accuracy is {\bf significantly lower} when
using Hermite-SaaS. This behavior for Hermite-SaaS holds over all the four datasets. 

\begin{table}[!b]
	\centering
	\resizebox{\columnwidth}{!}{%
		\begin{tabular}{c c c c c c c c}
			&  &  &     & Time/ & Total      &                                  &                                                  \\ 
			&  &  Max &   Max PL   & Epoch    & Time       & Cost                             & Saved                                          \\ 
			&   & $\Delta$  &   ACC      & (sec)    & (hrs)    & (\$)                             & (\$)                                             \\ \hline\hline
			\multirow{2}{*}{SVHN}     & H & \multirow{2}{*}{6.1\%} &  94.2\% & 470   & 5.6       & \multicolumn{1}{c}{137}       & \multicolumn{1}{c}{\multirow{2}{*}{71}}      \\ 
			
			& R  &  & 93.3\% & 409    & 8.5       & \multicolumn{1}{c}{208}        & \multicolumn{1}{c}{}                            \\ \hline
			\multirow{2}{*}{CIFAR10}   & H & \multirow{2}{*}{3.4\%} & 85.5\% & 348   & 2.7      & \multicolumn{1}{c}{66}          & \multicolumn{1}{c}{\multirow{2}{*}{$\ge$ 213}}   \\ 
			& R &  & 84.4\% & 304   & $\ge$ 11 & \multicolumn{1}{c}{$\ge$ 280} & \multicolumn{1}{c}{}                            \\ \hline
			Small & H & \multirow{2}{*}{5.2\%}  & 92.6\% & 47     & 0.5       & \multicolumn{1}{c}{12}       & \multicolumn{1}{c}{\multirow{2}{*}{$\ge$ 13}} \\ 
			NORB                 & R &  & 90.4\%    & 27   & $\ge$ 1 & \multicolumn{1}{c}{$\ge$ 25}  & \multicolumn{1}{c}{}                            \\ \hline
			\multirow{2}{*}{MNIST}     & H  & \multirow{2}{*}{4.5\%} & 98.2\% & 94    & 1.2       & \multicolumn{1}{c}{29} & \multicolumn{1}{c}{\multirow{2}{*}{-11}}      \\ 
			& R &  &  98.2\%  & 55    & 0.3       & \multicolumn{1}{c}{18}  & \multicolumn{1}{c}{}                            \\ \hline\hline
		\end{tabular}
	}
	\caption{\label{phase1_aws} \footnotesize \textbf{Cost effective and accurate pseudo-labels are generated by Hermite-SaaS.} Expenses when training Hermite-SaaS (H) and ReLU-SaaS (R) on AWS and the performance metrics on pseudo-labels generated. We observe that although Hermite-SaaS takes more time per epoch than ReLU-SaaS, the overall gains are better for Hermite-SaaS.}
\end{table}

\subsection{Financial Savings.}
ReLU takes less time per iteration since in our Hermite-SaaS procedure, we must perform
a few more matrix-vector multiplications.
However, our experimental results indicate that the lower {\em per iteration }time of ReLU is
negligible as compared to the total number of iterations.
To provide a better context for the amount of savings possible using Hermite-SaaS, we performed
an experiment to assess financial savings. 
We use AWS p3.16x large cluster for training.
We calculate the cost (to train a network)
by running the algorithm to reach a minimum level of pseudo-label accuracy, using the default pricing model given by AWS.
We can clearly see from Table \ref{phase1_aws}, that we get significant cost savings if we use Hermite-SaaS. Note that we found cases where ReLU-SaaS could not reach the pseudo-label accuracy that is achieved
by Hermite-SaaS: in these cases, we report a conservative lower bound for cost savings.

\begin{table}[!t]
	\centering
	\resizebox{\columnwidth}!{
		\begin{tabular}{c c c c c c c}
			\multirow{2}{*}{Method} &\textbf{ SVHN} &   Method & & \multicolumn{2}{c}{\textbf{CIFAR10}} & \\ 
			& 1000 & (SSL with PL) & 500 & 1000 & 2000 & 4000 \\
			\cmidrule(lr){1-2}\cmidrule(lr){3-7} \morecmidrules
			\cmidrule(lr){1-2}\cmidrule(lr){3-7}
			VAT+EntMin \cite{miyato2018virtual} & 3.86 & \multirow{2}{*}{TSSDL \cite{shi2018transductive}} &  \multirow{2}{*}{-} &  \multirow{2}{*}{21.13} & \multirow{2}{*}{14.65} & \multirow{2}{*}{10.90} \\ \cmidrule(lr){1-2}
			MT \cite{tarvainen2017mean} & 3.95 & & & & &  \\ \cmidrule(lr){1-2}\cmidrule(lr){3-7}
			TSSDL \cite{shi2018transductive} & 3.80 & \multirow{2}{*}{Label Prop. \cite{laine2016temporal}} &  \multirow{2}{*}{32.4} &  \multirow{2}{*}{22.02} & \multirow{2}{*}{15.66} & \multirow{2}{*}{12.69} \\\cmidrule(lr){1-2}
			TE \cite{laine2016temporal} & 4.42 & & & & &  \\\cmidrule(lr){1-2}\cmidrule(lr){3-7}
			ReLU-SaaS  & 3.82 & ReLU-SaaS & - & - & - & 10.94 \\ \cmidrule(lr){1-2}\cmidrule(lr){3-7}
			Hermite-SaaS & \textbf{3.57 $\pm$ 0.04} & Hermite-SaaS & \textbf{29.25} & \textbf{20.77} & \textbf{14.28} & \textbf{10.65} \\
			\cmidrule(lr){1-2}\cmidrule(lr){3-7} \morecmidrules
			\cmidrule(lr){1-2}\cmidrule(lr){3-7}
		\end{tabular}
	}
	\caption{\label{tab:sota_results} \footnotesize\textbf{Hermite-SaaS generalizes better.} (\textbf{left}): Test-set accuracies on SVHN dataset in comparison to the baselines provided in \cite{Cicek_2018_ECCV}. (\textbf{right}):Test-set accuracies on CIFAR10 dataset in comparison to other pseudo-label based SSL methods.}
	\vskip -0.2in
\end{table}

\subsection{Better Generalization.}
From the work of \cite{zhang2016understanding}, we know that more accurate labels provide better generalization. With
more accurate pseudo-labels from Hermite-SaaS, we expect a standard supervised classifier, trained using the pseudo-labels, to provide better accuracy on an unseen test set. We indeed observe this behavior in our experiments. The left sub-table in Table~\ref{tab:sota_results} shows the performance of Hermite-SaaS on SVHN datast against popular SSL methods reported in \cite{Cicek_2018_ECCV}. We also compare against known PL based SSL methods for CIFAR10 dataset in right sub-table of Table~\ref{tab:sota_results}. We use ResNet18 with a preactivation block for the supervised training phase, although other networks
can be used. 

%\begin{table}[]
%	\centering
%	\resizebox{0.75\columnwidth}{!}{%
%		\begin{tabular}{c c c c} 
%			       & SVHN-1K \\  
%			Method & Test Error Rate \\ [0.5ex]
%			\hline\hline
%			VAT+EntMin \cite{miyato2018virtual} & 3.86  \\ \hline
%			Mean Teacher \cite{tarvainen2017mean} & 3.95  \\ \hline
%			TSSDL \cite{shi2018transductive} & 3.80 \\ \hline
%			ReLU-SaaS \cite{Cicek_2018_ECCV} & 3.82  \\ \hline
%			Hermite-SaaS & \textbf{3.57 $\pm$ 0.04}  \\ \hline
%			\hline\hline
%		\end{tabular}
%	}
%	\caption{Generalization performance of Hermite-SaaS on SVHN dataset in comparison to other SSL approaches.}
%	\label{tab:svhn_sota}
%\end{table}

%\begin{table}[]
%	\centering
%	\resizebox{0.9\columnwidth}{!}{%
%		\begin{tabular}{c c c c c} 
%			& \multicolumn{2}{c}{CIFAR10-1K} \\  
%			Method (SSL with Pseudo-labeling) & 500 & 1000 & 2000 & 4000 \\ [0.5ex]
%			\hline\hline
%			TSSDL \cite{shi2018transductive} & - & 21.13 & 14.65 &10.90 \\ \hline
%			Label Propagation \cite{laine2016temporal} & 32.4 & 22.02 & 15.66 & 12.69  \\ \hline
%			ReLU-SaaS \cite{Cicek_2018_ECCV} & - & - & - & 10.94 \\ \hline
%			Hermite-SaaS & \textbf{29.25} & \textbf{20.77} & \textbf{14.28} & \textbf{10.65} \\ \hline
%			\hline\hline
%		\end{tabular}
%	}
%	\caption{Generalization performance of Hermite-SaaS on CIFAR10 dataset in comparison to other SSL approaches that use pseudo-labeling.}
%	\label{tab:cifar10_sota}
%\end{table}

\begin{figure*}[!t]
	%	\vskip -0.2in
	\centering
	\begin{subfigure}[b]{0.3\linewidth}
		\includegraphics[width=\linewidth]{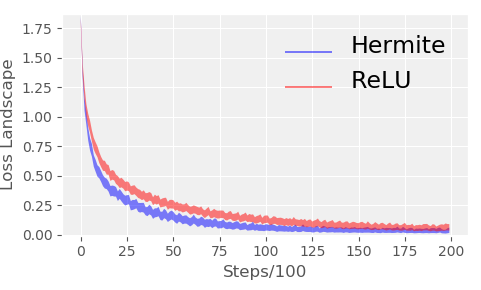}
		%\caption{}
		%\label{fig:verify_loss}
	\end{subfigure}%
	\begin{subfigure}[b]{0.3\linewidth}
		\includegraphics[width=\linewidth]{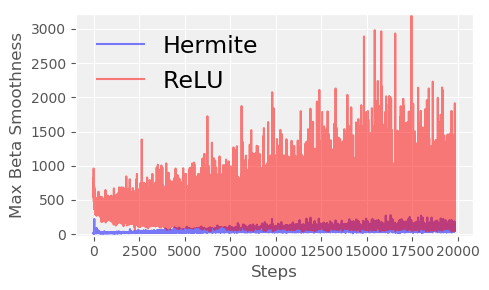}
		%\caption{}
		%\label{fig:verify_maxbeta}
	\end{subfigure}%
	\begin{subfigure}[b]{0.3\linewidth}
		\includegraphics[width=\linewidth]{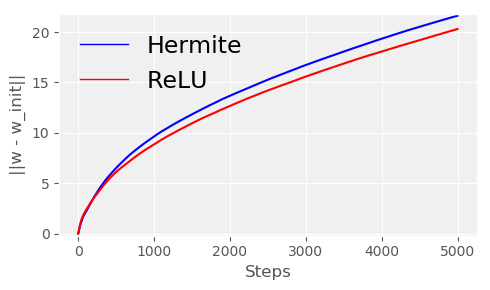}
		%\caption{}
		%\label{fig:effective_lr}
	\end{subfigure}%
	%	\vspace{-15pt}	
	\caption{\label{fig:verify} \footnotesize \textbf{Hermite generates smoother landscape than ReLU.} (\textbf{left}): Magnitude of loss and its variation is lower for Hermites. (\textbf{middle}): Gradients are more stable on Hermite loss landscape: lower maximum beta smoothness. (\textbf{right}): Hermite networks have a higher effective learning rate .}
	\vskip -0.2in
\end{figure*}

\subsection{Noise Tolerance.}
\label{noise_tolerance}
SSL approaches, especially PL methods, suffer from confirmation bias \cite{tarvainen2017mean} which is when the network starts to predict incorrect labels with higher confidence and resists change in the course of training. A recent result
describes how
ReLU networks \cite{hein2018relu} tend to provide
high confidence predictions even when the test data is different from the training data: this can be used in adversarial
settings \cite{nguyen2015deep}.
%In fact, it is well known that CNN based networks
%are easily fooled into making high confidence predictions 
%On the contrary, we find that Hermite networks are much more robust to this behavior.
%So, if ReLU activations demonstrate 
%this confirmation 
In other words, \cite{tarvainen2017mean} and \cite{hein2018relu} taken together hint at the possibility that
ReLUs may, at least, partly contribute to the confirmation bias in PL based SSL methods. 
%% Having acknowledged this issue about ReLU networks, it is not wrong
%% if we attribute the problem of confimation bias to the ReLU network
%% that is concealed underneath these SSL methods and anticipate a alleviation of the problem 
%If this is true, what would happen if ReLUs were replaced with Hermites in these settings?
While evaluating this question comprehensively is an interesting topic on its own,
we present some interesting preliminary results. 
We can assess the question partly by asking whether Hermites-SaaS is more tolerant to noise than ReLU-SaaS: approached
by artificially injecting noise into the labels. 
%Faster convergence of Hermite-SaaS as seen in Section~\ref{sec:saas_results:fast_convergence} indeed supports this  hypothesis of ours.
%In order to evaluate this hypothesis, we inject noise into the labels and demonstrate that Hermite-SaaS is more tolerant to it than ReLU-SaaS.
In this section, we first evaluate this issue on the theoretical side (for one hidden-layer Hermite networks)
where we find that Hermite networks 
do not give (false) high confidence predictions when the test data are different from the train data.
We later demonstrate noise tolerance of Hermite-SaaS.

{\bf{Accurate confidence of predictions in one-hidden layer hermite networks.}}
In order to prove our main result, we first prove a generic pertubation bound in the following lemma. Consider a $2$ layer network with an input layer, hidden layer and
an output layer, each with multiple units.
Denote $f_k(x)$ and $f_l(x)$ to be two output units with $x$ denoting
the input vector. Let $w_j$ be the weight vector between input and the $j^{th}$
and $a_{lk}$ be the weight connecting $l^{th}$ hidden unit to the $k^{th}$ output unit.
\begin{lemma}
	\label{lemma1}
	Consider the output unit of the network, $f_k(x) = \sum_j a_{kj} \sum_{i=0}^d c_i h_i(w_j^Tx)$, 
	where $c_i's$ are the Hermite coefficients and
	$d$ is the maximum degree of the herimite polynomial considered.
	Then, 
	$$\left|f_l(x) - f_k(x)\right| \le C d \alpha \beta$$
	Here, $C$ is a constant proportional to the
$\ell_\infty$ norms of $c_i$'s;
	$$\alpha = \max_{lk} \sum_j \left| a_{lj}-a_{kj}\right| \text {  ;  } \beta = \max(\|w\|_p^d\|x\|_q^d, \|w\|_p\|x\|_q),$$
	such that $1/p+1/q=1$.
%	\begin{proof}
%		{\em Sketch.}  The proof uses Holder's inequality. Only the asymptotic dependence on $d$ is important.
%	\end{proof}
\end{lemma}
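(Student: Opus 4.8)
The plan is to reduce the claim to (i) the elementary observation that the two output units $f_l$ and $f_k$ feed from the \emph{same} input-to-hidden weights $w_j$ and differ only in the hidden-to-output coefficients, (ii) a pointwise polynomial bound on the normalized Hermite polynomials, and (iii) one application of H\"older's inequality. First I would write, using the shared $w_j$'s and the definition $g(t) := \sum_{i=0}^d c_i h_i(t)$,
\begin{align}
f_l(x) - f_k(x) = \sum_j (a_{lj} - a_{kj})\, g(w_j^T x),
\end{align}
so that the triangle inequality immediately isolates the factor $\alpha$:
\begin{align}
\bigl| f_l(x) - f_k(x) \bigr| \;\le\; \Bigl( \sum_j |a_{lj} - a_{kj}| \Bigr)\, \max_j \bigl| g(w_j^T x) \bigr| \;\le\; \alpha \, \max_j \bigl| g(w_j^T x) \bigr|.
\end{align}
It then remains to show $\max_j | g(w_j^T x) | \le C d \beta$ with $C$ proportional to $\|c\|_\infty := \max_i |c_i|$.

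For the Hermite part, I would expand $h_i(t) = \sum_{m=0}^i \gamma_{i,m} t^m$ and bound $|g(t)| \le \|c\|_\infty \sum_{i=0}^d \sum_{m=0}^i |\gamma_{i,m}|\,|t|^m \le \|c\|_\infty\, \kappa_d \sum_{m=0}^d |t|^m$, where $\kappa_d := \max_{i \le d} \sum_m |\gamma_{i,m}|$ is an absolute constant once $d$ is fixed (the coefficients $\gamma_{i,m}$ are explicit from \eqref{eq:hermite}, and their absolute sum grows only subexponentially in $i$; alternatively one may invoke a uniform bound of the form $|h_i(t)| \lesssim \kappa_i \max(1,|t|)^i$). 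Now apply H\"older's inequality with $1/p+1/q=1$ to $t = w_j^T x$, giving $|w_j^T x| \le \|w_j\|_p \|x\|_q \le \|w\|_p\|x\|_q$ (reading $\|w\|_p$ as a norm of the collection $\{w_j\}$ that dominates each $\|w_j\|_p$). Hence $|t|^m \le (\|w\|_p\|x\|_q)^m$ for each $m \le d$.

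Finally I would sum the $d+1$ monomials and split on the magnitude of $A := \|w\|_p\|x\|_q$: when $A \ge 1$ the degree-$d$ term dominates, so $\sum_{m=0}^d A^m \le (d{+}1)A^d = (d{+}1)\|w\|_p^d\|x\|_q^d$; in the complementary small-argument regime the low-order behaviour controls the sum, which is where the second entry $\|w\|_p\|x\|_q$ of the maximum defining $\beta$ enters. Collecting the factors $(d{+}1)/d \le 2$ and $\kappa_d$ into the constant $C \propto \|c\|_\infty$ yields $\max_j |g(w_j^T x)| \le C d \beta$, and combining with the two displays above gives $|f_l(x)-f_k(x)| \le C d \alpha \beta$, as claimed.

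I expect the main obstacle to be step (ii): producing a clean pointwise bound for $g$ by a polynomial in $|t|$ with a constant that truly only depends on $\|c\|_\infty$ (and $d$ mildly), i.e. verifying that the absolute coefficient sums $\sum_m |\gamma_{i,m}|$ of the normalized Hermites stay controlled, and then correctly stitching the large-argument and small-argument regimes so that exactly the two-term maximum $\beta = \max(\|w\|_p^d\|x\|_q^d,\ \|w\|_p\|x\|_q)$ emerges (the constant/low-order terms of even-degree $h_i$ are the delicate point here, and may call for a mild normalization assumption on the inputs). Everything else is routine bookkeeping with the triangle and H\"older inequalities.
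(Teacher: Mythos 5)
Your proposal follows essentially the same route as the paper's proof: the same decomposition isolating $\alpha$ via the triangle inequality, a pointwise bound of the normalized Hermite polynomials by powers of $|w_j^T x|$, H\"older's inequality to pass to $\|w\|_p\|x\|_q$, and a case split on the size of that product to produce the two-term maximum $\beta$. The delicate point you flag --- that the constant term $h_0 = 1$ is not dominated by $\max(A, A^d)$ when $A < 1$ --- is present but glossed over in the paper's proof as well, so your account is, if anything, more careful on that score.
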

Now, we use the perturbation bound from Lemma \ref{lemma1} to show the  following result  (proof in the appendix $\S$~\ref{sec_app_proof}) that characterizes the behavior of a network if the test example is ``far'' from examples seen during training. 
\begin{theorem}\label{thm1}
	Let $f_k(x) = \sum_j a_{kj}\sum_{i=0}^{\infty} c_i h_i(w_j^T x)$, be a one-hidden layer network with the sum of infinite series of Hermite polynomials as an activation function. Here, $k=1,2,...,K$ are the different classes.
	Define $w_J = \min w_j^T x$. Let the data $x$ be mean normalized. If $\epsilon > 0$, the
	Hermite coefficients $c_i = (-1)^i$ and
	\begin{align*}
	\|x\| \ge \frac{1}{\|w_J\|} \log{\frac{\alpha}{\log{(1+K\epsilon)}}}
	\end{align*}
	then, we have that the predictions are approximately (uniformly) random. That is,
	\begin{align*}
	\frac{1}{K} - \epsilon \le \frac{e^{f_k(x)}}{\sum_{l=1}^K e^{f_l(x)}} \le \frac{1}{K} + \epsilon \text{  $\forall$ $k \in \{1,2...,K\}$}
	\end{align*}
	\begin{proof} {\em Sketch.} Note that the form of coefficients is important for this theorem. In particular, we use the exponential generating functions expansion of Hermites and exploit the form of normalization due to the softmax layer to provide a lower bound for the confidence of prediction for an arbitrary class. For this event to occur with high probability, we show that the test data has to be at least a certain distance far away from training data.
	\end{proof}
\end{theorem}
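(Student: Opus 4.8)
The plan is to chain two observations: (i) the alternating coefficients $c_i=(-1)^i$ make the infinite Hermite activation collapse to a closed form through the Hermite generating function, and (ii) a $K$-way softmax whose logits all lie within $\log(1+K\epsilon)$ of one another is $\epsilon$-close to the uniform distribution. The exotic-looking quantity $\log\frac{\alpha}{\log(1+K\epsilon)}$ in the hypothesis is exactly what drops out when these are composed, and the $\pm\epsilon$ in the conclusion comes entirely from (ii).

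\textbf{Step 1: closed form for the activation.} First I would evaluate $\sigma(u):=\sum_{i=0}^{\infty}(-1)^i h_i(u)$ using the exponential generating function of the normalized Hermite polynomials, $\sum_{i\ge 0} h_i(u)\, t^i/\sqrt{i!}=e^{ut-t^2/2}$: the alternating signs encode the choice $t=-1$, and the series collapses to $\sigma(u)=\kappa\,e^{-u}$ for a positive constant $\kappa$ (which either gets absorbed into $\alpha$ or cancels once we use that $x$ is mean-normalized). Hence $f_k(x)=\sum_j a_{kj}\,\sigma(w_j^T x)=\kappa\sum_j a_{kj}\,e^{-w_j^T x}$. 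I expect this step to be the main obstacle: with the $h_i=H_i/\sqrt{i!}$ normalization the series $\sum_i(-1)^i h_i$ does not converge in the naive pointwise sense, so one must commit to the correct summation (Abel summation, or reading $c_i=(-1)^i$ as the coefficient in the $t^i/\sqrt{i!}$ expansion) and justify the term-by-term manipulation. This is also precisely why Lemma~\ref{lemma1} cannot be invoked directly — its estimate $Cd\alpha\beta$ diverges as $d\to\infty$ because the triangle inequality discards the cancellation, whereas the generating function keeps it.

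\textbf{Step 2: bounding the logit gap.} With the closed form in hand, put $w_J^T x=\min_j w_j^T x$ and factor the dominant exponential out of the difference of two outputs:
\begin{align*}
|f_k(x)-f_l(x)|=\kappa\Big|\sum_j (a_{kj}-a_{lj})\,e^{-w_j^T x}\Big|\le\kappa\,e^{-w_J^T x}\sum_j|a_{kj}-a_{lj}|\le\kappa\,\alpha\,e^{-w_J^T x},
\end{align*}
since $e^{-w_j^T x}\le e^{-w_J^T x}$ for every $j$ and $\alpha=\max_{lk}\sum_j|a_{lj}-a_{kj}|$. The hypothesis $\|x\|\ge\frac{1}{\|w_J\|}\log\frac{\alpha}{\log(1+K\epsilon)}$, in the out-of-distribution regime where $w_J^T x$ scales with $\|w_J\|\,\|x\|$, then forces $w_J^T x\ge\log\frac{\alpha}{\log(1+K\epsilon)}$, so $\kappa\,\alpha\,e^{-w_J^T x}\le\log(1+K\epsilon)$ and hence $|f_k(x)-f_l(x)|\le\log(1+K\epsilon)$ for every pair $k,l$.

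\textbf{Step 3: near-uniform softmax.} For logits with $\delta:=\max_{a,b}|f_a(x)-f_b(x)|$, every ratio $e^{f_l(x)-f_k(x)}$ lies in $[e^{-\delta},e^{\delta}]$, so $1/p_k=\sum_{l=1}^{K}e^{f_l(x)-f_k(x)}\in[Ke^{-\delta},Ke^{\delta}]$ and therefore $p_k\in[e^{-\delta}/K,\,e^{\delta}/K]$. Taking $\delta=\log(1+K\epsilon)$ gives the upper bound $p_k\le\frac1K+\epsilon$ at once, and the lower bound $p_k\ge\frac{1}{K(1+K\epsilon)}\ge\frac1K-\epsilon$ because $(1-K\epsilon)(1+K\epsilon)\le1$. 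Combining Steps 2 and 3 yields the stated two-sided bound and finishes the proof. The leftover bookkeeping — the exact value of $\kappa$ and the precise passage between $w_J^T x$ and $\|w_J\|\,\|x\|$ via the $p,q$-Hölder pairing together with mean-normalization — is routine once Step 1 is pinned down.
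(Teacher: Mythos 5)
Your proposal follows essentially the same route as the paper's own proof: collapse the alternating Hermite series via the generating function at $t=-1$ to get $f_k(x)=\kappa\sum_j a_{kj}e^{-w_j^Tx}$, bound the pairwise logit gap by $\alpha e^{-w_J^Tx}\le\log(1+K\epsilon)$, and convert that gap into the two-sided $\frac1K\pm\epsilon$ softmax bound. The only differences are that you explicitly flag the two weak points the paper glosses over (convergence/summability of $\sum_i(-1)^ih_i$ and the direction of the $\|w_J\|\,\|x\|\ge w_J^Tx$ step), which is to your credit but does not change the argument.
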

As the  data is mean normalized, any test example
$x$ with high $\|x\|$ implies that it is far from the training data.
For large $\|x_{\text{test}}\|$, this result shows that the predictions are
fairly random -- a desirable property -- clearly, we should not predict confidently
if we have not seen such an example earlier.

{\bf{Hermite-SaaS is more noise tolerant.}}
To further assess if Hermite-SaaS reduces confirmation bias, we experiment by injecting noise in the labels provided for SSL training. In particular, we chose the label for a subset of images,
uniformly at random.  We conduct experiments with  $30\%$ label noise levels on
CIFAR10. After estimating the pseudo-labels we trained a model in a supervised manner using Resnet18. 
Our results show that Hermite-SaaS based models  obtain similar or a higher test set accuracy of about $80\%$.
This is encouraging, but we also observe 
%while also
faster convergence \textbf{(95 epochs)} compared to a ReLU-SaaS model \textbf{(at least 600 epochs)}.
In other words, Hermite activations
based training yields models/estimators with low variance suggesting 
that they may behave well in the presence of outliers. We also investigate how Hermite activations behave in presence of a standard robust learning method with noisy labels, specifically \cite{tanaka2018joint}. We observe that the Hermites version boosts the
performance of \cite{tanaka2018joint} both in terms of accuracy and rate of convergence (see appendix $\S$~\ref{sec_app_noiseinjection}).

%\begin{table}[!h]
%	\centering
%		\resizebox{\columnwidth}{!}{%
%	\begin{tabular}{cccccc}
%		\multicolumn{1}{c}{\textbf{10\% Noise}}   & $\text{A}_{\text{Best}}$  & $\text{N}_{\text{Best}}$ & \multicolumn{1}{c}{\textbf{30\% Noise}}  & $\text{A}_{\text{Best}}$  & $\text{N}_{\text{Best}}$ \\
%		\hline\hline
%		Hermite & 85        & 224      &  Hermite & $\sim$80   & 95        \\
%		ReLU    & 84        & 328       &  ReLU    & $\sim$80   & $\ge$600        \\
%		\hline\hline
%	\end{tabular}
%}\caption{\label{tab:noisy_saas} SaaS experiments on Noisy Labelled dataset. We report the best accuracy ($\text{A}_{\text{Best}}$), and the number of epochs ($\text{N}_{\text{Best}}$)
%	to reach this accuracy. }
%\end{table}

\section{Why Hermites provide faster convergence?}
%In Section~\ref{noise_tolerance}, w
We discussed how the noise tolerance properties of Hermites help with faster convergence of Hermite-SaaS.
Here, we show how incorporating Hermite activations within a network makes the loss landscape
smoother relative to ReLUs. Smoother landscapes implying faster convergence is not controversial \cite{ghadimi2013stochastic}.
One difference between ReLUs and Hermites is the {\bf nonsmooth} behavior: for ReLU networks,
standard first order methods require $O(1/\epsilon^2)$ (versus $O(1/\epsilon$) iterations for Hermite nets) to find a local minima. This is
the reason why is it not enough to just replace Hermites with (a sufficiently large number of) ReLUs even though ReLU networks are universal approximators. 
We provide three pieces of empirical evidences to support our claim that Hermites provide smoother landscape.

{\bf{(a) Lower  Loss Values.}}
 We examine the loss landscape, along the SGD trajectory, following the directions outlined in \cite{santurkar2018does}. The authors there showed that {\tt \small BN} generates smoother landscapes: 
in Fig.~\ref{fig:verify}, we show that {\tt \small BN+Hermites} generate even smoother landscapes implying much faster training.
In Fig.~\ref{fig:verify}  (left), we plot training loss $L(w - \eta\nabla L)$ for different $\eta$ values for ResNet18 architecture. Hermite network generates a better  loss landscape  (lower magnitude) than ReLU network. 
{\bf{ (b) Maximum Beta smoothness.}}
In Fig.~\ref{fig:verify} (middle), we show the maximum difference in the $\ell_2$ norm of gradients over the distance moved in that direction. Hermite networks have a lower variation of gradient norm change than ReLU networks, indicating that the gradients on Hermite Landscape are more stable implying faster convergence. 
{\bf (c) Higher effective learning rate.}
In Fig.~\ref{fig:verify} (right), we plot deviation of weights from initialization and observe an increase in this metric with Hermites. 
%\begin{figure*}[!h]
%%	\vskip -0.2in
%	\begin{subfigure}[]{0.32\linewidth}
%		\includegraphics[width=\linewidth]{figs/verify_loss_2.png}
%		%\caption{}
%		%\label{fig:verify_loss}
%	\end{subfigure}%
%	\begin{subfigure}[]{0.32\linewidth}
%		\includegraphics[width=\linewidth]{figs/verify_maxbeta_2.png}
%		%\caption{}
%		%\label{fig:verify_maxbeta}
%	\end{subfigure}%
%	\begin{subfigure}[]{0.32\linewidth}
%		\includegraphics[width=\linewidth]{figs/effective_lr_2.png}
%		%\caption{}
%		%\label{fig:effective_lr}
%	\end{subfigure}%
%%	\vspace{-15pt}	
%	\caption{\footnotesize Hermite Networks generate smoother landscape than ReLU networks. Magnitude of loss and its 
%		variation is lower for Hermites (left). Gradients are more stable on Hermite loss landscape: lower maximum beta smoothness (middle). Hermite networks have a higher effective learning rate (right).}
%	\label{fig:verify}
%%	\vskip -0.2in
%\end{figure*}

\section{Conclusion}
  In this paper, we
  studied the viability and potential benefits
  of using a finite Hermite polynomial bases as activation functions, as a substitute for
  ReLUs.
  The lower order Hermite polynomials have
  nice mathematical properties from the optimization point of view,
  although little is known
  in terms of their practical applicability to networks with more
  than a few layers. 
  We observed from our experiments that simply
  replacing ReLU with an expansion in terms of
  Hermite polynomials can yield significant computational benefits,
  and we demonstrate the utility of this idea in a
  computationally intensive semi-supervised learning task.
  Under the assumption that the training is being performed on the cloud (published pricing structure), 
  we show sizable financial savings are possible.
  On the mathematical side, we also showed
  that Hermite based networks have nice noise stability properties that appears to be an
  interesting topic to investigate, from the robustness or adversarial angles. Furthermore, since Hermite-nets avoid over-confident predictions on newer test samples, it would be interesting to investigate the benefits of using Hermite-nets to solve (variants of) meta learning problems.

\section*{Acknowledgments}
Research supported by NIH R01 AG062336, NSF CAREER award RI$\#$1252725 and American Family Insurance.
We thank one of the reviewers for pointing out a promising connection to meta-learning that will be pursued in follow-up work.
We are grateful to Glenn Fung for discussions and pointing out a nice paper on smooth SVMs by Yuh-Jye Lee and Olvi Mangasarian \cite{lee2001ssvm}
which dealt with smoothing the ``plus'' function. Rest in peace, Olvi.

{\small
\bibliographystyle{ieee_fullname}
\bibliography{ms.bib}
}

\section{Appendix}

\subsection{Proof of Theorems}
\label{sec_app_proof}
Assume that the data is mean normalized (mean $0$ and identity covariance matrix). In order to prove our main result, we first prove
a generic pertubation bound in the following lemma. We analyze hermite polynomials on a simpler setting, a one-hidden-layer network, to get an intuitive feeling of our argument. Consider a one-hidden-layer network with an input layer, a hidden layer, and an output layer, each with multiple units. Let $x$ denote the input vector, and $f_k(x)$ and $f_l(x)$ to be two output units. 
Denote $w_j$ to be the weight vector between input and the $j^{th}$ hidden unit,
and $a_{lk}$ to be the weight connecting $l^{th}$ hidden unit to the $k^{th}$ output unit. The network can be visualized as in Figure~\ref{fig_theorem}.
\begin{figure}[!h]
	\begin{center}
		\centerline{\includegraphics[width=0.75\columnwidth]{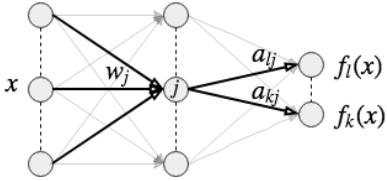}}
		\caption{One Hidden Layer Network}
		\label{fig_theorem}
	\end{center}
\end{figure}

In the lemma~\ref{lemma1}, we bound the distance between two output units $f_k(x)$ and $f_l(x)$. This gives us an upperbound on how any two output units differ as a function of the norm of the (input) data, $\|x\|$.
\begin{lemma}
	\label{lemma1}
	Consider the output unit of the network, $f_k(x) = \sum_j a_{kj} \sum_{i=0}^d c_i h_i(w_j^Tx)$, 
	where $c_i's$ are the Hermite coefficients and
	$d$ is the maximum degree of the hermite polynomial considered.
	Then, 
	$$\left|f_l(x) - f_k(x)\right| \le C d \alpha \beta$$
	Here, $C$ is a constant that depends on the coefficients of the
	Hermite polynomials and
	$$\alpha = \max_{lk} \sum_j \left| a_{lj}-a_{kj}\right| \text {  ;  } \beta = \max(\|w\|_p^d\|x\|_q^d, \|w\|_p\|x\|_q),$$
	such that $1/p+1/q=1$.
\end{lemma}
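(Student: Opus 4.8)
\emph{Proof proposal.} The plan is to reduce the statement to a one-variable estimate on the polynomial $g(t) := \sum_{i=0}^d c_i h_i(t)$ and then control the arguments $w_j^T x$ via H\"older's inequality. First I would use that $f_k(x)$ is linear in the output weights $(a_{kj})_j$: subtracting,
\begin{align*}
f_l(x) - f_k(x) = \sum_j (a_{lj} - a_{kj})\, g(w_j^T x).
\end{align*}
Applying the triangle inequality together with the definition $\alpha = \max_{lk}\sum_j |a_{lj}-a_{kj}|$ gives
\begin{align*}
|f_l(x) - f_k(x)| \le \Big(\max_j |g(w_j^T x)|\Big)\sum_j |a_{lj}-a_{kj}| \le \alpha \cdot \max_j |g(w_j^T x)|,
\end{align*}
so it remains to bound $|g(w_j^T x)|$ uniformly in $j$.

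Next I would bound the argument: by H\"older's inequality with $1/p + 1/q = 1$, $|w_j^T x| \le \|w_j\|_p\|x\|_q \le \|w\|_p\|x\|_q =: T$ for every $j$. Then I would estimate $g$ on $[-T,T]$. Write each normalized Hermite polynomial in monomial form, $h_i(t) = \sum_{m=0}^i b_{i,m} t^m$, and set $B_i := \sum_{m=0}^i |b_{i,m}|$ and $\bar B := \max_{0\le i\le d} B_i$, a constant determined solely by $d$. For $|t|\le T$ one has $|t|^m \le \max(1,|t|^i) \le \max(1,T^d)$ whenever $m \le i \le d$, hence $|h_i(t)| \le B_i \max(1,T^d)$. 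Summing over the $d+1$ terms and factoring out $\|c\|_\infty := \max_i |c_i|$,
\begin{align*}
|g(w_j^T x)| \le \|c\|_\infty \sum_{i=0}^d |h_i(w_j^T x)| \le (d+1)\,\bar B\,\|c\|_\infty \max(1, T^d).
\end{align*}

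Finally I would collect constants. In the regime where the lemma is actually applied, $\|x\|$ (hence $T$) is large, so $\max(1,T^d) = T^d \le \beta$ with $\beta = \max(\|w\|_p^d\|x\|_q^d,\ \|w\|_p\|x\|_q)$; more generally one absorbs the $m=0$ contribution of $g$ into the constant and again gets $\max(1,T^d) \le \beta$. Using $d+1 \le 2d$ for $d \ge 1$, this yields
\begin{align*}
|f_l(x) - f_k(x)| \le C\, d\, \alpha\, \beta, \qquad C := 2\,\bar B\,\|c\|_\infty,
\end{align*}
where the factor $d$ is precisely the number of Hermite terms and $C$ is proportional to the $\ell_\infty$ norm of the coefficients $c_i$, as claimed. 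The only mildly delicate point is the small-argument case $|t|\le 1$, where $\max(1,T^d)$ need not collapse to $\beta=\max(T^d,T)$; this is handled either by restricting to the intended regime $\beta \ge 1$ or by keeping the constant term of $g$ separate and folding it into $C$. Everything else is a routine triangle-inequality and H\"older computation, so I do not anticipate any real obstacle.
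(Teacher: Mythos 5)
Your proposal is correct and follows essentially the same route as the paper's proof: decompose $f_l - f_k$ linearly in the output weights to extract $\alpha$, bound the Hermite sum pointwise via the polynomial growth $|h_i(t)| \lesssim |t|^i$, and control $|w_j^T x|$ by H\"older's inequality to obtain $\beta$. The only (cosmetic) differences are that the paper applies H\"older a second time to the sum over $i$ to produce the factor $d = d^{1/p}d^{1/q}$ where you use the simpler $\ell_\infty$--$\ell_1$ bound, and that you are in fact more careful than the paper about the small-argument regime where $\max(1,|t|^d)$ is not dominated by $\max(|t|,|t|^d)$.
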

\begin{proof}
	\begin{align*}
		f_l(x) &= \sum_j a_{lj} (\sum_i c_i h_i(w_j^T x)) \text{ and } \\
		f_k(x) &= \sum_j a_{kj} (\sum_i c_i h_i(w_j^T x)) 
	\end{align*}
	\begin{align}
	|f_l(x) &- f_k(x)| \nonumber\\
	&= |\mathlarger{\sum}_j (a_{lj} - a_{kj})(\sum_i c_i h_i(w_j^T x))|\\
	\label{maineq}
	&\le |\mathlarger{\sum}_j |(a_{lj} - a_{kj})||(\sum_i c_i h_i(w_j^T x))| 
	\end{align}
	$h_i$ is the normalized hermite polynomial ($h_i = \frac{H_i}{\sqrt{i!}}$).\\
	$|h_i(z)| = |\frac{H_i(z)}{\sqrt{i!}}| \le C_1 \frac{|z|^i}{\sqrt{i!}}$, here, $C_1$ is a function of the coefficients of a given hermite polynomial.\\\\
	Let's bound $|(\sum_i c_i h_i(w_j^T x))|$,
	\begin{align*}
	|(\sum_i c_i &h_i(w_j^T x))| \\
	&\le \|c_i\|_p \|h_i\|_q \text{ where  } \frac{1}{p}+\frac{1}{q} = 1\\
	&\le \Big[\sum_i c_i^{p}\Big]^{1/p} \Big[\sum_i h_i^{q}\Big]^{1/q}\\
	&\le d^{1/p}C_2 \Bigg[\sum_{i=0}^d \frac{C_1|w_j^T x|^{iq}}{\sqrt{i!}^q}\Bigg]^{1/q}  \text{$\forall i, c_i \le C_2$}\\
	&\le d^{1/p} C_2 \Big[ \sum_{i=0}^d C_1|w_j^Tx|^{iq}\Big]^{1/q} \\
	&\text{ Replace $C_1$, $C_2$ with $C$ } \nonumber\\
	&\le d^{1/p} C d^{1/q} \max(|w_j^T x|, |w_j^T x|^d)\\
	&\le C d \max(|w_j^T x|, |w_j^T x|^d)\\
	&\text{ Using, $J = \argmax{\|w_j\|}$ and $1/p + 1/q = 1$} \nonumber\\
	&\le C d \max(\|w_J\|_{1/p}\|x\|_{1/q}, \|w_J\|_{1/p}^d \|x\|_{1/q}^d)
	\end{align*}
	Substituting the above result back into Equation~\ref{maineq}, we get,
	\begin{align*}
	&|f_l(x) - f_k(x)|\\
	&\le |\mathlarger{\sum}_j |(a_{lj} - a_{kj})||(\sum_i c_i h_i(w_j^T x))| \\
	&\le  C d \max(\|w_J\| \|x\|, \|w_J\|^d \|x\|^d) \max_{kl}\sum_j |(a_{lj} - a_{kj})|\\
	&= C d \alpha \beta
	\end{align*}
	Here, $\alpha = \max_{kl}\sum_j |(a_{lj} - a_{kj})|$ and $\beta = \max(\|w_J\| \|x\|, \|w_J\|^d \|x\|^d)$
\end{proof}
The above lemma can be seen as a perturbation bound since we can simply interpret $f_k = f_l+ g_l$ for some other function, (hopefully with nicer properties) $g_l$. 
Our main theorem is simply  an application of the above lemma to a special case. This allows us to quantify the noise resilience of hermite polynomials. In particular, we show that if the test data is far from the train data, then hermite polynomials trained deep networks give low confidence predictions. This property allows us to detect outliers during inference, especially in mission critical applications since it allows a human in a loop system \cite{yu2015lsun}.
\begin{theorem}\label{thm1}
	Let $f_k(x) = \sum_j a_{kj}\sum_{i=0}^{\infty} c_i h_i(w_j^T x)$, be a one-hidden-layer network with the sum of infinite series of hermite polynomials as an activation function. Here, $k=1,2,...,K$ are the different classes.
	Define $w_J = \min w_j^T x$. Let the data $x$ be mean normalized. If $\epsilon > 0$, the
	Hermite coefficients $c_i = (-1)^i$ and
	\begin{align*}
	\|x\| \ge \frac{1}{\|w_J\|} \log{\frac{\alpha}{\log{(1+K\epsilon)}}}
	\end{align*}
	then, we have that the predictions are approximately (uniformly) random. That is,
	\begin{align*}
	\frac{1}{K} - \epsilon \le \frac{e^{f_k(x)}}{\sum_{l=1}^K e^{f_l(x)}} \le \frac{1}{K} + \epsilon \text{  $\forall$ $k \in \{1,2...,K\}$}
	\end{align*}
	\begin{proof}
		Observe that, 
		\begin{align*}
		\frac{e^{f_k(x)}}{\sum_{l=1}^K e^{f_l(x)}} = \frac{1}{\sum_{l=1}^{K} e^{f_l(x) - f_l(x)}}
		\end{align*}
		\\
		
		Using the fact that $|x| \ge x$ and $-|x| \le x$, we observe,
		\begin{align}
		\label{eq: main}
		\frac{1}{\sum_{l=1}^{K} e^{|f_l(x) - f_k(x)|}} &\le \frac{1}{\sum_{l=1}^{K} e^{f_l(x) - f_k(x)}} \nonumber \\ &\le \frac{1}{\sum_{l=1}^{K} e^{-|f_l(x) - f_k(x)|}}
		\end{align}
		\\
		
		Let's bound $|f_l(x) - f_k(x)|$,
		\begin{align*}
		|f_l(x) &- f_k(x)| \\
		&=|\sum_j a_{kj} (\sum_i c_i h_i(w_j^T x)) - \sum_j a_{lj} (\sum_i c_i h_i(w_j^T x))| \\
		&= |\sum_j (a_{kj} - a_{lj}) (\sum_i c_i h_i(w_j^T x))|\\
		&= |\sum_j (a_{kj} - a_{lj}) (\sum_i \frac{c_i}{(-1)^n} h_i(w_j^T x)(-1)^n)|
		\end{align*}
		From the properties of hermite polynomials, 
		\begin{align}
		e^{xt - t^2/2} &= \sum_{i=0}^{\infty} h_i(x) t^n \nonumber\\
		\label{eq:sumseries}
		e^{-x - 1/2} &= \sum_{i=0}^{\infty} h_i(x) (-1)^n \text{ ,when $t = -1$}
		\end{align}
		Choose $c_i = (-1)^i$,  then
		\begin{align*}
		\max_j \sum_i (c_i)^i h_i(w_j^T x)(-1)^n &= \max_j e^{-w_j^T x - 1/2} \\
		&\le e^{- \min_j w_j^T x}
		\end{align*}
		Thus,
		\begin{align*}
		|f_l(x) &- f_k(x)| \\
		&= |\sum_j (a_{kj} - a_{lj}) (\sum_i  h_i(w_j^T x)(-1)^n)| \\
		&\le e^{- \min_j w_j^T x}|\sum_j (a_{kj} - a_{lj})|\\
		&\le  e^{- \min_j w_j^T x} \alpha \text{ where $\alpha = \max_{kl}|\sum_j (a_{kj} - a_{lj})|$}
		\end{align*}
		Let $e^{- \min_j w_j^T x} \alpha \le log(1+K\epsilon)$, then,
		\begin{align*}
		&\implies
		\frac{1}{\sum_{l=1}^{K} e^{|f_l(x) - f_k(x)|}} \le \frac{1}{\sum_{l=1}^{K} e^{f_l(x) - f_k(x)}} \\
		&\hspace{108pt}  \le \frac{1}{\sum_{l=1}^{K} e^{-|f_l(x) - f_k(x)|}}\\
		&\implies 
		\frac{1}{K(1+K\epsilon)} \le \frac{1}{\sum_{l=1}^{K} e^{f_l(x) - f_k(x)}} \le \frac{1+K\epsilon}{K}\\
		&\implies 
		\frac{1}{K} - \epsilon \le \frac{1}{\sum_{l=1}^{K} e^{f_l(x) - f_k(x)}} \le \frac{1}{K} + \epsilon
		\end{align*}
		Let $J = \argmin_j{(w_j^T x)}$. The condition, \\ $e^{- w_J^T x} \alpha \le log(1+K\epsilon)$,
		\begin{align*}
		&\implies w_J^T x \ge \log{\frac{\alpha}{\log{(1+K\epsilon)}}} \\
		&\implies \|w_J\|\|x\| \ge w_J^T x \ge \log{\frac{\alpha}{\log{(1+K\epsilon)}}} \\
		&\implies \|x\| \ge \frac{1}{\|w_J\|} \log{\frac{\alpha}{\log{(1+K\epsilon)}}}
		\end{align*}
		\end{proof}
\end{theorem}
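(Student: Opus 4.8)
The plan is to reduce the softmax output to a quantity governed by the pairwise differences $f_l(x)-f_k(x)$, bound those differences uniformly, and then push the bound through the exponential. First I would write
$$\frac{e^{f_k(x)}}{\sum_{l=1}^K e^{f_l(x)}} = \frac{1}{\sum_{l=1}^K e^{f_l(x) - f_k(x)}},$$
and sandwich the denominator using $-|t| \le t \le |t|$, so that it is enough to control $|f_l(x) - f_k(x)|$ uniformly over all pairs $k,l$. This is the same flavor of perturbation estimate as Lemma~\ref{lemma1}, but here the activation is an infinite series, so the crude polynomial bound used there is replaced by a closed form.

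The key step — and the one I expect to be the crux — is to exploit the special coefficient choice $c_i = (-1)^i$ together with the exponential generating function of the normalized Hermite polynomials, $e^{xt - t^2/2} = \sum_{i\ge 0} h_i(x)\,t^i$. Evaluating at $t=-1$ collapses the otherwise intractable infinite activation sum to $\sum_i c_i h_i(w_j^T x) = e^{-w_j^T x - 1/2}$. Hence
$$|f_l(x) - f_k(x)| = \Big|\sum_j (a_{kj} - a_{lj})\, e^{-w_j^T x - 1/2}\Big| \le e^{-\min_j w_j^T x}\,\alpha,$$
where $\alpha = \max_{k,l}\big|\sum_j (a_{kj}-a_{lj})\big|$; I drop the harmless $-1/2$ and bound each $e^{-w_j^T x}$ by $e^{-\min_j w_j^T x}$ since $w_j^T x \ge \min_j w_j^T x$. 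The one subtlety to be careful about is that the generating-function identity is being invoked at real argument with $t=-1$, where the series converges; this is standard.

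Next I would impose the condition $e^{-\min_j w_j^T x}\,\alpha \le \log(1+K\epsilon)$, which forces every factor $e^{\pm|f_l(x)-f_k(x)|}$ into the interval $\big[\tfrac{1}{1+K\epsilon},\,1+K\epsilon\big]$. Summing over $l$ yields $\tfrac{1}{K(1+K\epsilon)} \le \tfrac{1}{\sum_l e^{f_l(x)-f_k(x)}} \le \tfrac{1+K\epsilon}{K}$. The right-hand side is exactly $\tfrac1K + \epsilon$, and for the left-hand side one checks $\tfrac{1}{K(1+K\epsilon)} \ge \tfrac1K - \epsilon$, which is equivalent to $1 \ge 1 - K^2\epsilon^2$ and hence always holds. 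Finally I would unwind the imposed condition: $e^{-\min_j w_j^T x}\,\alpha \le \log(1+K\epsilon)$ is equivalent to $\min_j w_j^T x \ge \log\frac{\alpha}{\log(1+K\epsilon)}$, and applying Cauchy--Schwarz, $\|w_J\|\,\|x\| \ge w_J^T x$ with $J = \argmin_j w_j^T x$, gives the stated lower bound $\|x\| \ge \tfrac{1}{\|w_J\|}\log\frac{\alpha}{\log(1+K\epsilon)}$. Mean-normalization of the data is what licenses interpreting large $\|x\|$ as ``$x$ far from the training set,'' which is the point of the statement.
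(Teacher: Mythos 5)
Your proposal is correct and follows essentially the same route as the paper's proof: rewriting the softmax as $1/\sum_l e^{f_l(x)-f_k(x)}$, sandwiching via $-|t|\le t\le |t|$, collapsing the infinite activation sum with the generating function $e^{xt-t^2/2}$ at $t=-1$ under the choice $c_i=(-1)^i$, and unwinding the resulting condition via Cauchy--Schwarz. Your explicit verification that $\tfrac{1}{K(1+K\epsilon)}\ge \tfrac1K-\epsilon$ is a small detail the paper leaves implicit, but the argument is otherwise identical.
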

We interpret the above theorem in the following way: whenever $\|x_{test}\|$ is large, the infinity norm of the function/prediction is approximately $1/K$ where $K$ is the number of classes. This means that the predictions are the same as that of random chance -- low confidence. The only caveat of the above theorem is that it requires the all of (infinite) hermite polynomials for the result to hold. However, {\bf all} of our experiments indicate that such a property holds true empirically, as well.

\subsection{Computational Benefits on AWS p2.xlarge}
In the paper, we have seen the cost benefits of using Hermite Polynomials on AWS p3.16xlarge instance. In this section, we will examine the performance on AWS p2.xlarge instance as indicated in Table~\ref{tab_app_awsp2x}. Clearly, as AWS p2.xlarge costs less, the benefits achieved when using hermite polynomials is more significant in the compute time.

\begin{table}[!h]
	\centering
	\resizebox{\columnwidth}{!}{%
		\begin{tabular}{c c c c c c }
			&         & Time per & Total      &                                  &   Time                                               \\ 
			&         & Epoch    & Time       & Cost                             & Savings                                          \\ 
			&         & (sec)    & (hours)    & (\$)                             & (hours)                                             \\ \hline\hline
			\multicolumn{1}{c}{\multirow{2}{*}{SVHN}}      & Hermite & 666.8   & 2.22       & \multicolumn{1}{c}{2.0}       & \multicolumn{1}{c}{\multirow{2}{*}{2.6}}      \\ 
			
			\multicolumn{1}{c}{}                           & ReLU    & 435.3    & 4.84       & \multicolumn{1}{c}{4.35}        & \multicolumn{1}{c}{}                            \\ \hline
			\multicolumn{1}{c}{\multirow{2}{*}{CIFAR-10}}   & Hermite & 454.2   & 3.53       & \multicolumn{1}{c}{3.18}          & \multicolumn{1}{c}{\multirow{2}{*}{$\ge$ 8.5}}   \\ 
			\multicolumn{1}{c}{}                           & ReLU    & 320.7   & $\ge$ 12 & \multicolumn{1}{c}{$\ge$ 10.8} & \multicolumn{1}{c}{}                            \\ \hline
			\multicolumn{1}{c}{\multirow{2}{*}{SmallNORB}} & Hermite & 164.8     & 1.74       & \multicolumn{1}{c}{1.57}       & \multicolumn{1}{c}{\multirow{2}{*}{$\ge$ 1.33}} \\ 
			\multicolumn{1}{c}{}                           & ReLU    & 81.8    & $\ge$ 3.07 & \multicolumn{1}{c}{$\ge$ 2.76}  & \multicolumn{1}{c}{}                            \\ \hline
			\multicolumn{1}{c}{\multirow{2}{*}{MNIST}}     & Hermite & 345.4    & 4.41      & \multicolumn{1}{c}{3.97} & \multicolumn{1}{c}{\multirow{2}{*}{-2}}      \\ 
			\multicolumn{1}{c}{}                           & ReLU    & 180     & 2.4       & \multicolumn{1}{c}{2.16}  & \multicolumn{1}{c}{}                            \\ \hline\hline
		\end{tabular}
	}
	\caption{\footnotesize \textbf{Hermite-SaaS on AWSp2.xlarge instance.}  Hermite-SaaS saves compute time when compared to ReLU-SaaS}
	\label{tab_app_awsp2x}
\end{table}

%\section{Hermites are competitive vis-à-vis modern Activation Functions}
%The deep autoencoder experiment in the paper demonstrated us that hermite polynomials could be exploited to achieve faster convergence and hence better test set accuries. The  experiment that we replicate is the paper is based on \cite{zaheer2018adaptive} where the authors use \textit{sigmoid} activation function. We extend the experiment in this section and compare against other modern activation functions such as \textit{ReLU}, \textit{Elu} and \textit{SeLU}  \cite{nwankpa2018activation}. 
%\subsubsection{Setup} The architecture of the auto-encoder is $(28\times28)-1000-500-250-30$  with a symmetric decoder which is run on MNIST dataset. We use Adam optimizer with a ReduceLRonPlateau schedule and a patience, decay factor as 20 epochs and 0.5 respectively. We report average over $4$ experiments and use a batchsize of $128$. The number of hermite polynomials used for the experiment is $4$.
%\subsubsection{Experiment} Table~\ref{fig_supp_ae_activations} shows us that Hermite polynomials provide faster convergence over other \textit{ReLU}, \textit{Elu} and \textit{SeLU}. We also observe that SeLU is not as stable as ReLU.
%
%\begin{figure}[!h]
%	\begin{center}
%		\centerline{\includegraphics[width=0.75\columnwidth]{./}}
%		\caption{Hermite Polynomials achieve faster convergence with respect to other activation functions.}
%		\label{fig_supp_ae_activations}
%	\end{center}
%\end{figure}

\subsection{Higher number of active units with Hermite Polynomials}
\label{sec:sparsity}
There is consensus that ReLUs suffer from the “dying ReLU” problem. Even ignoring pathological cases, it is not
uncommon to find that as much as $40\%$ of the activations
could ``die". A
direct consequence of this behavior is that the dead neurons
will stop responding to variations in error/input. This makes ReLUs
not very suitable for Recurrent Network based architectures
such as LSTMs \cite{le2015simple}. On the other hand \cite{shin2019trainability} shows that having a certain number of active units, the number determined based on the network architecture is a necessary condition for successful training. We demonstrate here that we meet  the necessary condition with a large margin. 

{\bf Experiment} We investigated the number of active units present in a Hermite Network and a ReLU network at the start and the end of training using preactResNet18 model. We determined that the percentage of active units in Hermites are  $100\% / 99.9 \%$ (EPOCH $0$/ EPOCH $200$) whereas in  ReLU  are $51\% / 45\%$ (EPOCH $0$ / EPOCH $200$). We also plot the number of active units at the end of training across the layers in Figure~\ref{fig_supp_activeunits}. It can be seen in Figure~\ref{fig_supp_activeunits} that Hermites have twice the number of Active Units than ReLU.
\begin{figure}[!h]
	\begin{center}
		\centerline{\includegraphics[width=0.75\columnwidth]{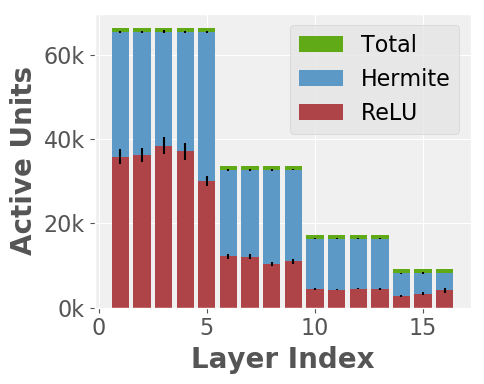}}
		\caption{\footnotesize \textbf{Hermite networks have more active units.} This figure displays the number of active units present in different layers of ResNet18 network at the end of training. It can be seen that Hermite Nets have close to $100\%$ active units while in ReLU nets half the number of neurons are dead at the end of training.}
		\label{fig_supp_activeunits}
	\end{center}
	\vskip -0.3in	
\end{figure}

\subsection{Early Riser Property Preserved: ResNets}
\label{sec_app_resnet}
{\bf Setup} We used CIFAR10 data in all the experiments in here with random crop and random horizontal flip for data augmentation schemes. With a batch size of $128$ for training and $SGD$ as an optimizer, we trained ResNets for $200$ epochs. We began with some initial learning rate and then reduce it by $10$ at $82^{nd}$ and $123^{rd}$ epoch -- this is standard practice. We repeated every experiment $4$ times in order to control the sources of variance.

{\bf Experiment} The experimental results for ResNet18 and ResNet50 models can be found in Figure~\ref{fig_app_resnet}, Figure~\ref{fig_app_resnet18} and Figure~\ref{fig_app_resnet50} respectively. The results for ResNet150 can be found in Table~\ref{tab_app_resnet152}. We observe that the early riser property is preserved accross different learning rates for these ResNet models. There is a small increase in the number of parameters proportional to the number of layers in the network. 
\begin{figure*}[!t]
	\begin{subfigure}[b]{0.5\linewidth}
		\includegraphics[width=\linewidth]{./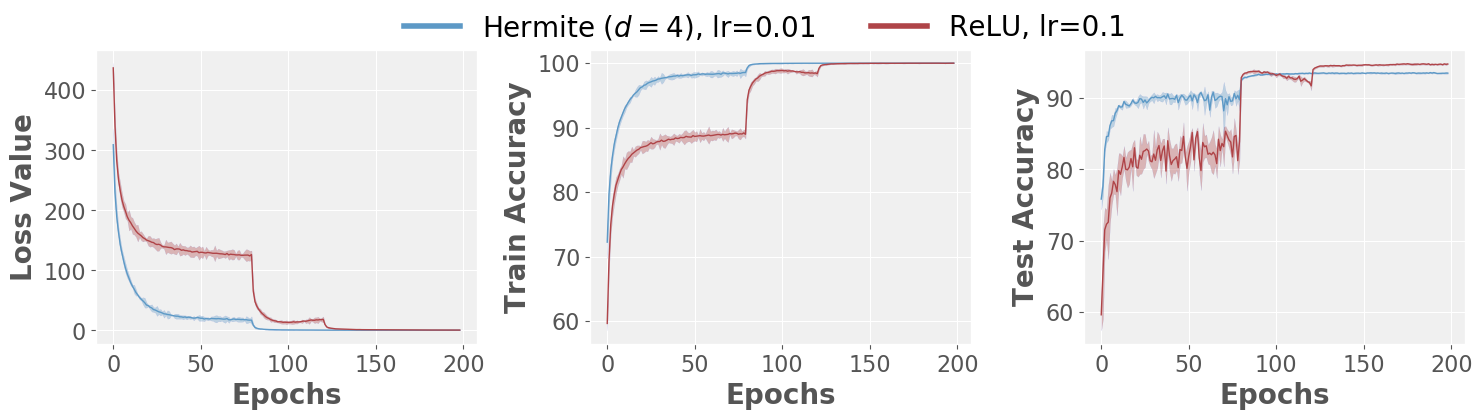}
	\end{subfigure}%	
	\begin{subfigure}[b]{0.5\linewidth}
		\includegraphics[width=\linewidth]{./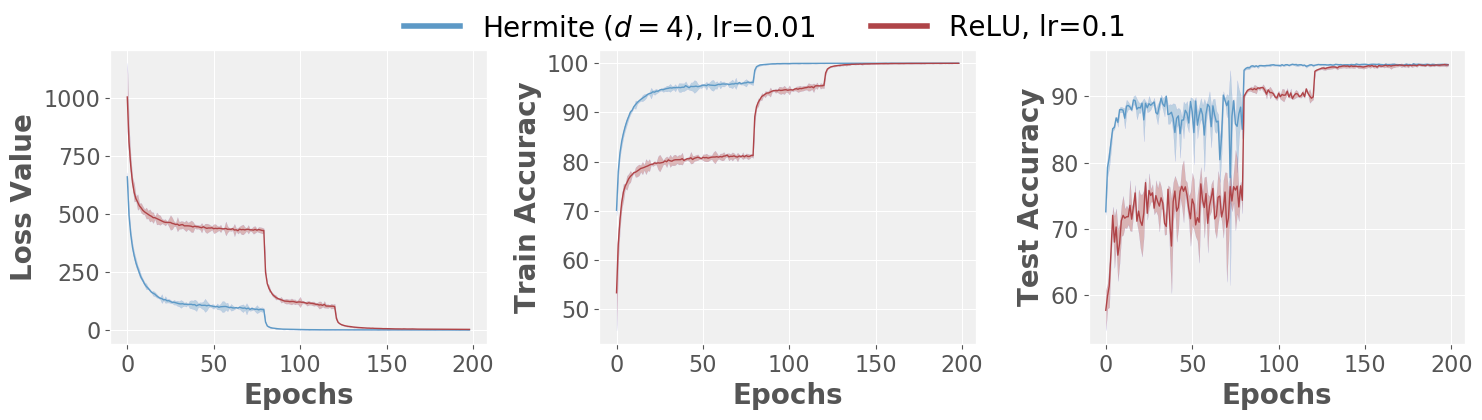}
	\end{subfigure}
	\begin{subfigure}[b]{0.5\linewidth}
		\includegraphics[width=\linewidth]{./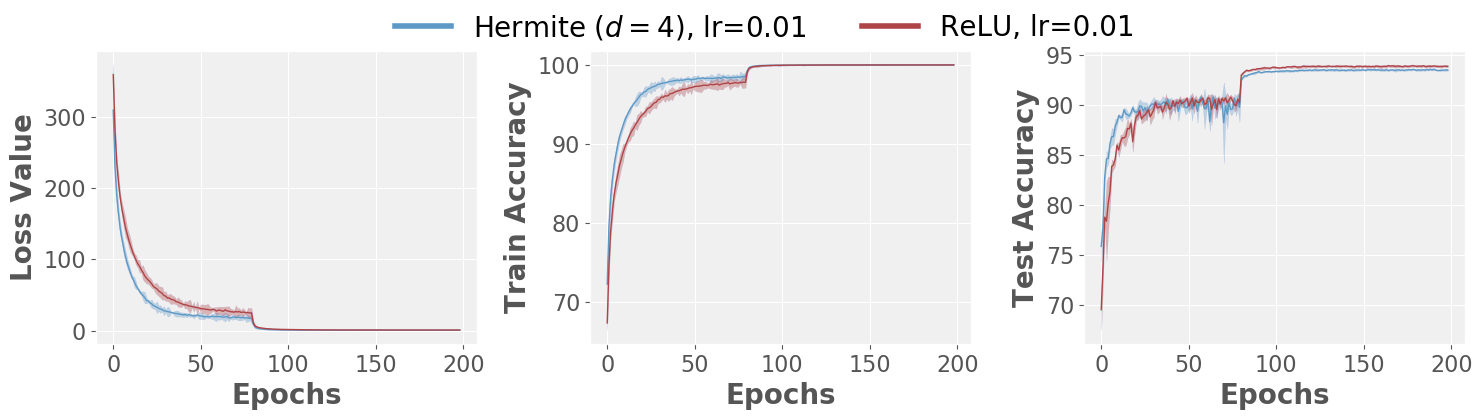}
	\end{subfigure}%	
	\begin{subfigure}[b]{0.5\linewidth}
		\includegraphics[width=\linewidth]{./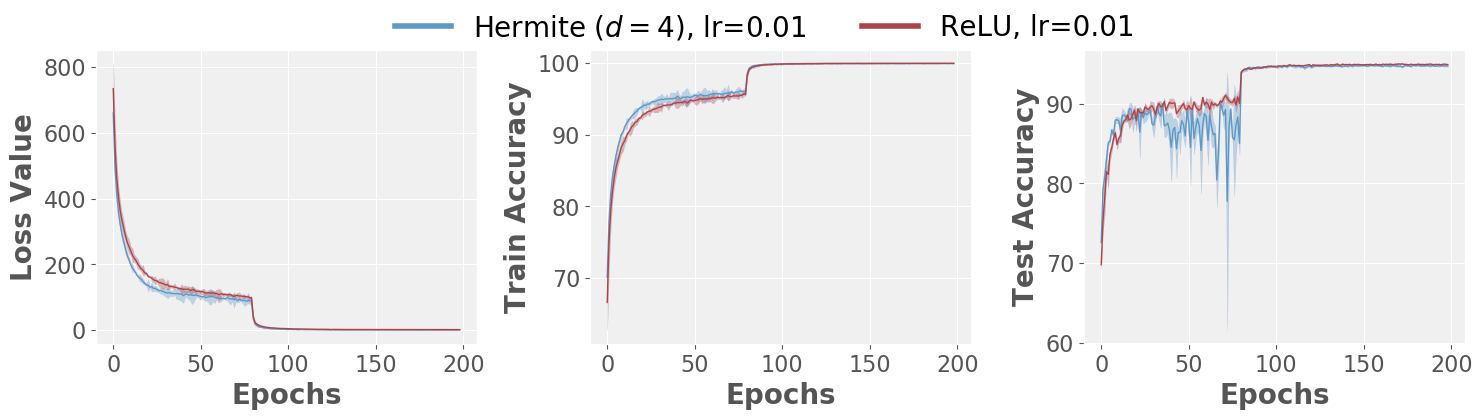}
	\end{subfigure}
	\begin{subfigure}[b]{0.5\linewidth}
		\includegraphics[width=\linewidth]{./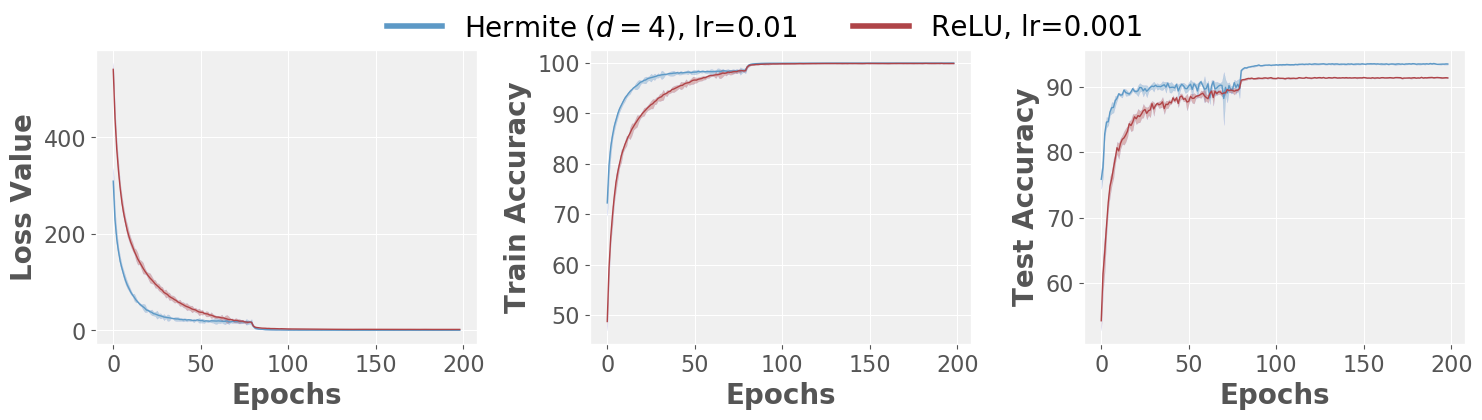}
		\caption{ResNet18}
		\label{fig_app_resnet18}
	\end{subfigure}%	
	\begin{subfigure}[b]{0.5\linewidth}
		\includegraphics[width=\linewidth]{./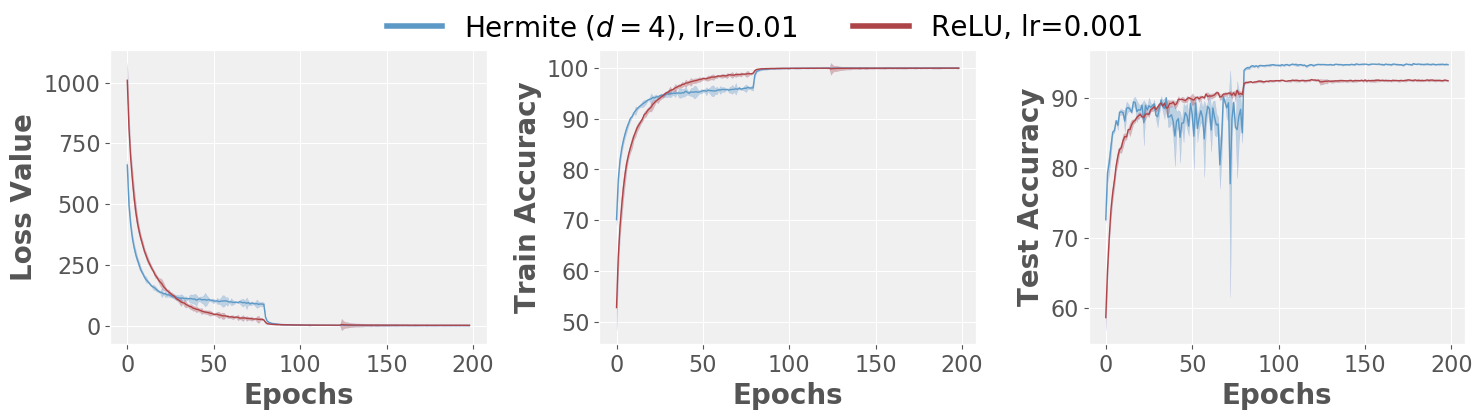}
		\caption{ResNet50}
		\label{fig_app_resnet50}
	\end{subfigure}%
	\caption{\footnotesize \textbf{Early riser property preserved in ResNets} (a) ResNet18 and (b) ResNet50. We observe that a test accuracy of $90\%$ is achieved in approximately
		half the number of epochs in
Hermite-ResNets over ReLU-ResNets on CIFAR10 dataset over different learning rates. The closest that Hermite gets to ReLU is the case when both their learning rates are $0.01$ in ResNet50. In this case, we observe that $90\%$ testset accuracy achieved in
20 epochs for Hermite-ResNet50 and 30 epochs for ReLU-ResNet50.}
	\label{fig_app_resnet}
\end{figure*} 
\begin{table}[!h]
	\centering
	\resizebox{\columnwidth}{!}{%
		\begin{tabular}{c c c c c}
			Dataset   & Number of             & Best            & Epochs to reach                 \\
			CIFAR10 & Trainable Parameters               & Test Accuracy                & $90\%$ Test Accuracy \\ \hline\hline
			\multirow{2}{*}{Hermite}      & \multirow{2}{*}{58,145,574} & \multirow{2}{*}{$95.48\%$} & \multirow{2}{*}{$30$}  \\
			&                        &                         & \\ \hline
			\multirow{2}{*}{ReLU}  & \multirow{2}{*}{58,144,842} & \multirow{2}{*}{$94.5\%$} & \multirow{2}{*}{$80$} \\             
			&                        &                         & \\	\hline
			\hline
		\end{tabular}%
	}
	\caption{\footnotesize \textbf{Hermite Polynomials in ResNet152}. We observe small increase in the number of
		parameters. Test accuracy for the hermite model converges in less than half the number of epochs.}
	\label{tab_app_resnet152}
	\vskip -0.2in
\end{table}

\subsection{Early Riser Property Preserved: DenseNets}
{\bf Setup} All the experiments here were conducted on CIFAR10 dataset with random crop and random horizontal flip schemes for data augmentation. We used the basic block architecture \cite{huang2017densely} with $40$ layers, a growth rate of $12$, and a compression rate of $1.0$ in the transition stage.

{\bf Experiment} Figure~\ref{fig_app_densenet} shows the experimental results of using Hermite Polynomials in a DenseNet. We observe that the early riser property is preserved only in the training loss and the training accuracy curves. 

\begin{figure}[!b]
	\begin{center}
		\centerline{\includegraphics[width=\columnwidth]{./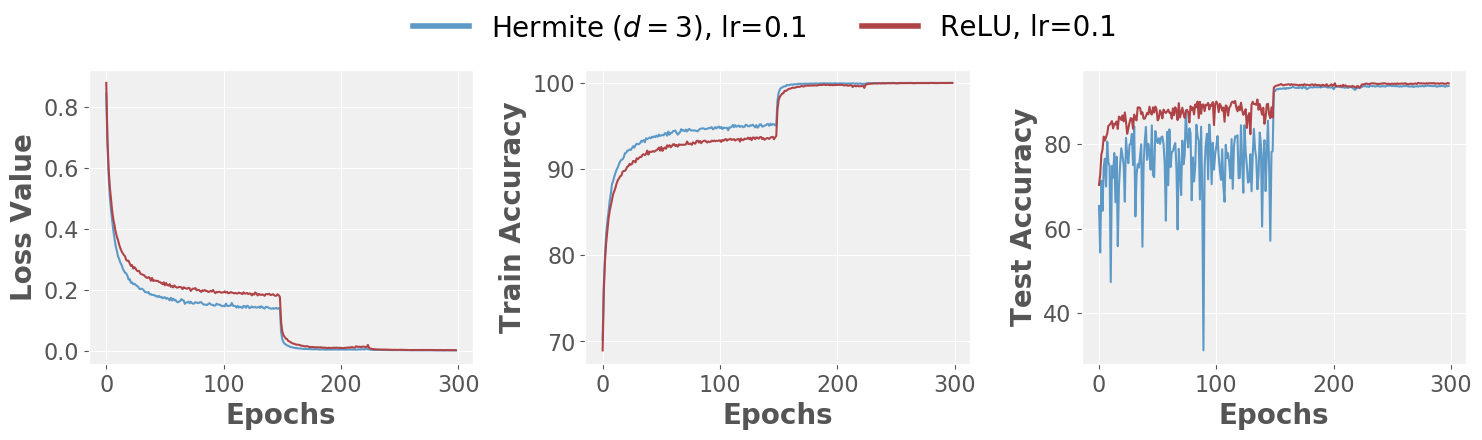}}
		\caption{\footnotesize \textbf{Early Riser Property in DenseNets.} We observe that the training loss and the training accuracy converge at a faster rate when using Hermite Polynomials in DenseNets.}
		\label{fig_app_densenet}
	\end{center}
	%\vskip -0.4in
\end{figure}

\subsection{SSL Results on CIFAR10-1K }
In the paper, we report the cost benefits on CIFAR10 dataset with 4000 labelled items. Here, we report benefits on Hermite Polynomials on CIFAR10 dataset with 1000 labelled item, another popular experimental setting for CIFAR10 dataset. We find that Hermite-SaaS works equally well for CIFAR10-1K. We achieve a max pseudolabel accuracy of $77.4\%$ 
compared to $75.6\%$ in ReLU-SaaS. 
Hermite-SaaS saves cost {\bf  $>\$280$} on AWS p3.16xlarge and compute time by 
$13+$ hours on AWS p2.xlarge, compared to ReLU-SaaS when run with $(20/135)$ inner/outer epoch config.

\subsection{Noise injection experiments on Hermites}
\label{sec_app_noiseinjection}
{\bf Setup} We repeat the SaaS experiments by injecting $10\%$ and $30\%$ label noise to CIFAR10 dataset. We utilize the method
proposed by \cite{tanaka2018joint}, let's call it \textit{Tanaka \etal},  as an optional post-processing step after
obtaining pseudo-labels via SaaS. Basically, \textit{Tanaka \etal} performs a “correction” to
minimize the influence of noise once the pseudolabels have
been estimated. The authors in \cite{tanaka2018joint} propose an alternating
optimization algorithm where the weights of the network
and the labels of the data are updated alternatively. We conduct experiments with $10\%$ and $30\%$ label noise levels on
the CIFAR10 dataset. After estimating the pseudolabels
and/or using the scheme in \cite{tanaka2018joint}, we trained a model in a
supervised manner using Resnet18.

\begin{table}[!b]
	\centering
	\resizebox{0.6\columnwidth}{!}{%
		\begin{tabular}{cccc}
			\multicolumn{2}{c}{\textbf{10\% Noise}}   & $\text{A}_{\text{Best}}$  & $\text{N}_{\text{Best}}$ \\
			\hline\hline
			\multicolumn{1}{c|}{\multirow{2}{*}{SaaS}} & Hermite & 85        & 224        \\ \cline{2-4}
			\multicolumn{1}{c|}{}                      & ReLU    & 84        & 328        \\
			\hline
			\multicolumn{1}{c|}{\multirow{2}{*}{\shortstack{SaaS + \\ \textit{Tanaka \etal}}}} & Hermite & 85  & 244 \\ \cline{2-4}
			\multicolumn{1}{c|}{}                                                    & ReLU    & 84  & 284 \\
			\hline\hline
		\end{tabular}
	}
	\vskip 0.2in
	\resizebox{0.6\columnwidth}{!}{%
		\begin{tabular}{cccccc}
			\multicolumn{2}{c}{\textbf{30\% Noise}}  & $\text{A}_{\text{Best}}$  & $\text{N}_{\text{Best}}$ \\
			\hline\hline
			\multicolumn{1}{c|}{\multirow{2}{*}{SaaS}} & Hermite & $\sim$80   & 95        \\ \cline{2-4}
			\multicolumn{1}{c|}{}                      & ReLU    & $\sim$80   & $\ge$600        \\
			\hline
			\multicolumn{1}{c|}{\multirow{2}{*}{\shortstack{SaaS + \\ \textit{Tanaka \etal}}}} & Hermite & $\sim$80 & 61  \\ \cline{2-4}
			\multicolumn{1}{c|}{}                                                    & ReLU    & $\sim$80 & 299 \\
			\hline\hline
		\end{tabular}
	}	
	\caption{\footnotesize \textbf{SaaS experiments on Noisy Labelled dataset.} We report the best accuracy ($\text{A}_{\text{Best}}$), and number of epochs ($\text{N}_{\text{Best}}$)
		to reach this accuracy. \textit{Tanaka \etal} stands for noisy label processing method proposed in \cite{tanaka2018joint}.}
	\label{tab:noisy_saas}
	%\vskip -0.3in
\end{table}

{\bf Experiment} Our results summarized in Table~\ref{tab:noisy_saas} show that Hermite-SaaS based models obtain a similar or higher test set accuracy. We also observe that our
model converges faster compared to a ReLU-SaaS model.
Our experimental results also indicate that post processing techniques (such as \cite{tanaka2018joint}) may not always be usefuls
to improve the generalization performance of models.

\subsection{Experiments on Shallow Nets}
\label{sec_app_shallow}
{\bf Setup} We used a 3-layer network where the hidden layers have 256 nodes each. We used CIFAR10 dataset for this experiment with normalized pixel values and no other data augmentation schemes. The architecture we worked with is thus [3072, 256, 256, 10]. We ran two experiments, one with ReLU activation function and the other with Hermite activations ($d=5$ polynomials). The loss function was cross-entropy, SGD was the optimization algorithm and we added batch normalization. Learning rate was chosen as $0.1$ and the batch size as $128$. Figure~\ref{fig_app_shallownet} shows the loss function and the training accuracy curves that we obtained. It was observed that, when ReLU is just replaced with Hermite activations while maintaining everything else the same, the loss function for hermites converges at a faster rate than ReLU atleast for the earlier epochs. 

\begin{figure}[!h]
	\begin{center}
		\centerline{\includegraphics[width=\columnwidth]{./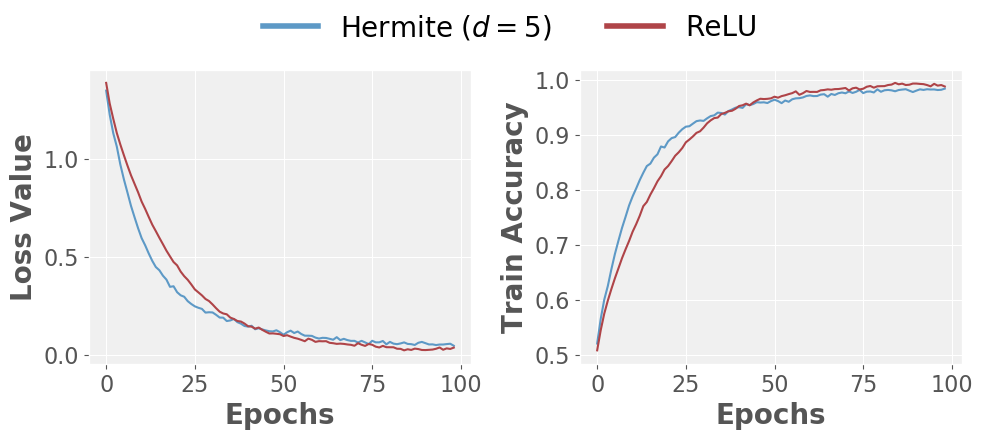}}
		\caption{\footnotesize\textbf{Experiments on Shallow Nets.} The training loss and training accuracy reduce at a faster rate with Hermite-Polynomials}
		\label{fig_app_shallownet}
	\end{center}
	\vskip -0.3in
\end{figure}

\subsection{Cloud Services Details}
We utilize AWS EC2 On-Demand instances in our research. Specifically, we use p3.16xlarge instance which costs $\$ 24.48$ per hour and p2.xlarge instance which costs $\$ 0.9$ per hour.

\end{document}